\newcommand*{\tran}{^{\mkern-1.5mu\mathsf{T}}}
\newcommand{\E}{\mathbb{E}}
\newcommand{\Q}{\mathbb{Q}}   
\newcommand{\R}{\mathbb{R}}   
\newcommand{\X}{\mathcal{X}}  
\newcommand{\Y}{\mathcal{Y}}  
\newcommand{\ip}[2]{\left\langle{#1}\right\rangle_{#2}} 
\newcommand{\kme}[1]{\mu_{k_{#1}}(\P{#1})} 
\newcommand{\norm}[2]{\left\|{#1}\right\|_{#2}}
\newcommand{\nys}{\text{Nys}} 
\newcommand{\opnorm}[1]{\left\|{#1}\right\|_{\mathrm{op}}}
\newcommand{\prodmarginals}{\otimes_{m=1}^M \P_ m} 
\newcommand{\tb}{\textbf}    
\newcommand{\tphs}{\otimes_{m=1}^M \H_{k_m}}
\renewcommand{\H}{\mathcal{H}} 
\renewcommand{\O}{\mathcal{O}} 
\renewcommand{\P}{\mathbb{P}} 
\renewcommand{\b}{\mathbf}    
\renewcommand{\d}{\mathrm{d}} 
\newcommand{\kmeP}{\mu_{k}\left(\tilde \P_{n'}\right)} 
\newcommand{\kmePm}{\mu_{k_m}\left(\tilde \P_{m,n'}\right)}
\DeclareMathOperator{\trace}{tr}
\DeclareMathOperator{\hadamard}{\circ}
\DeclareMathOperator{\HSIC}{HSIC}
\newcommand{\Psamp}{\mathbb{\hat P}}
\newcommand{\Span}{\mathrm{span}} 
\newtheorem{lemma}{Lemma}[section]
\newtheorem{theorem}{Theorem}[section]
\newtheorem{proposition}{Proposition}[section]
\newtheorem{remark}{Remark}
\title{Nyström $M$-Hilbert-Schmidt Independence Criterion}
\author[1]{Florian~Kalinke}
\author[2]{Zoltán~Szabó}
\affil[1]{Institute for Program Structures and Data Organization, Karlsruhe Institute of Technology, Karlsruhe, Germany}
\affil[2]{Department of Statistics, London School of Economics, London, UK}
\begin{document}

\maketitle

\begin{abstract}
Kernel techniques are among the most popular and powerful approaches of data science. Among the key features that make kernels ubiquitous are (i) the number of domains they have been designed for, (ii) the Hilbert structure of the function class associated to kernels facilitating their statistical analysis, and (iii) their ability to represent probability distributions without loss of information. These properties give rise to the immense success of Hilbert-Schmidt independence criterion (HSIC) which is able to capture joint independence of random variables under mild conditions, and permits closed-form estimators with quadratic computational complexity (w.r.t.\ the sample size). In order to alleviate the quadratic computational bottleneck in large-scale applications, multiple HSIC approximations have been proposed, however these estimators are restricted to $M=2$ random variables, do not extend naturally to the $M\ge 2$ case, and lack theoretical guarantees. In this work, we propose an alternative Nyström-based HSIC estimator which handles the $M\ge 2$ case, prove its consistency, and  demonstrate its applicability in multiple contexts, including synthetic examples, dependency testing of media annotations, and causal discovery.
\end{abstract}

\section{Introduction}
Kernels methods \citep{aronszajn50theory} have been on the forefront of data science for more than $20$ years \citep{scholkopf02learning,steinwart08support}, and they underpin some of the most powerful and principled machine learning techniques currently known. The key idea of kernels is to map the data into a (possibly infinite-dimensional) feature space in which one computes the inner product implicitly by means of a symmetric, positive definite function, the so-called kernel function.

Kernel functions have been designed for 
strings \citep{watkins99dynamic,lodhi02text} or more generally for sequences \citep{kiraly19kernel}, sets \citep{haussler99convolution, gartner02multi}, rankings \citep{jiao16kendall}, fuzzy domains \citep{guevara17cross} and graphs \citep{borgwardt20graph}, which renders them broadly applicable.
Their extension to the space of probability measures \citep{berlinet04reproducing,smola07hilbert} allows to represent distributions in a reproducing kernel Hilbert space (RKHS) by the so-called mean embedding. 
Such embeddings form the main building block of maximum mean discrepancy (MMD;  \citet{smola07hilbert,gretton12kernel}), which quantifies the discrepancy of two distributions as the RKHS distance of their respective mean embeddings. 
MMD is (i) a semi-metric on probability measures, (ii) a metric iff.\ the kernel is characteristic \citep{fukumizu08kernel,sriperumbudur10hilbert}, (iii) an instance of integral probability metrics  (IPM; \citet{muller97integral,zolotarev83probability}) when the underlying function class in the IPM is chosen to be the unit ball in an RKHS. 

Measuring the discrepancy of a joint distribution to the product of its marginals by MMD gives rise to the Hilbert-Schmidt independence criterion (HSIC; \citet{gretton05measuring}). 
HSIC was shown to be equivalent \citep{sejdinovic13equivalence} to distance covariance \citep{szekely07measuring,szekely09brownian,lyons13distance}; \citet{sheng23distance} have recently proved a similar result for the conditional case. 
HSIC is known to capture the independence of $M=2$ random variables with characteristic $(k_m)_{m=1}^2$ kernels (on the respective domains) as proved by \citet{lyons13distance}; for more than two components ($M>2$; \citet{quadrianto09kernelized,sejdinovic13kernel,pfister18kernel}) universality \citep{steinwart01influence, micchelli06universal} of $(k_m)_{m=1}^M$-s is sufficient  \citep{szabo18characteristic2}. 
HSIC has been deployed successfully in a wide range of domains including independence testing \citep{gretton08kernel,pfister18kernel,albert22adaptive},  feature selection \citep{camps10remote,song12feature,wang22rank} with applications in biomarker detection \citep{gonzalez19block} and wind power prediction \citep{bouche23wind}, clustering \citep{song07dependence,gonzalez19block}, and causal discovery \citep{mooij16distinguishing,pfister18kernel,chakraborty19distance,scholkopf21causal}.

Various estimators for HSIC and other dependence measures exist in the literature, out of which we summarize the most closely related ones to our work in Table~\ref{tab:comparison}. 
The classical V-statistic based HSIC estimator (V-HSIC; \citealt{gretton05measuring,quadrianto09kernelized,pfister18kernel}) is powerful but its runtime increases quadratically with the number of samples, which limits its applicability in large-scale settings. 
To tackle this severe computational bottleneck, approximations of HSIC (N-HSIC, RFF-HSIC) have been proposed \citep{zhang18large}, relying on the Nyström \citep{williams01using} and the random Fourier feature (RFF; \citealt{rahimi07random}) method, respectively. 
However, these  estimators (i) are limited to two components, (ii) their extension to more than two components is not straightforward, and (iii) they lack theoretical guarantees. 
The RFF-based approach is further restricted to finite-dimensional Euclidean domains and to translation-invariant kernels. 
The normalized finite set independence criterion (NFSIC; \citealt{jitkrittum17adaptive2}) replaces the RKHS norm of HSIC with an $L_2$ one which allows the construction of linear-time estimators. 
However, NFSIC is also limited to two components, requires $\R^d$-valued input, and analytic kernels \citep{chwialkowski15fast}. 
Novel complementary approaches are the kernel partial correlation coefficient (KPCC; \citealt{huang22kernel}), and tests basing on incomplete U-statistics \citep{schrab22incompleteu}. One drawback of KPCC is its cubic runtime complexity w.r.t.\ the sample size when applied to kernel-enriched domains. \citet{schrab22incompleteu}'s approach can run in linear time, but it is limited to $M=2$ components. We note that all approaches require choosing an appropriate kernel: Here, one can optimize over various parametric families of kernels for increasing a proxy of test power in case of MMD \citep{jitkrittum16interpretable,liu20learning}, and in case of HSIC \citep{jitkrittum17adaptive2}. One can also design (almost) minimax-optimal MMD-based two-sample tests  using spectral regularization \citep{hagrass22spectral}.

\begin{table*}
  \centering
  \caption{Comparison of kernel independence measures: $n$ -- number of samples, $M$ -- number of components, $n'$ -- number of Nyström samples, $s$ -- number of random Fourier features, $d$ -- data dimensionality.}
  \label{tab:comparison}
  \begin{tabular}{lllll}
    \toprule
    Independence Measure                            & Runtime Complexity                 & $M$     & Domain & Admissible Kernels    \\
    \midrule
    V-HSIC \citep{pfister18kernel}       & $\O\left(Mn^2\right) $             & $M\ge2$ & any    & universal             \\
    NFSIC \citep{jitkrittum17adaptive2} & $\O\left(n\right)$                                &  $M=2$       & $\R^d$       & analytic, characteristic              \\
    
    N-HSIC \citep{zhang18large}                & $\O\left({n'}^3 + n{n'}^2\right) $ & $M=2$   & any    & characteristic        \\
    RFF-HSIC \citep{zhang18large}                    & $\O\left(s^2n\right) $                                & $M=2$   & $\R^d$ & translation-invariant, characteristic \\
    KPCC \citep{huang22kernel}   & $\O\left(n^3\right)$               & $M=2$   & any    & characteristic        \\
    \textbf{Nyström $M$-HSIC} (N-MHSIC)                     & $\O\left(M{n'}^3+Mn'n\right) $      & $M\ge2$ & any    & universal             \\
    \bottomrule
  \end{tabular}
\end{table*}

The restriction of existing HSIC approximations to two components is a severe limitation in recent applications like causal discovery which require independence tests capable of handling more than two components. Furthermore, the emergence of large-scale data sets necessitates algorithms that scale well in the sample size. To alleviate these bottlenecks, we make the following \tb{contributions}. 
\begin{enumerate}[labelindent=0em,leftmargin=1.2em,topsep=0cm,partopsep=0cm,parsep=0cm,itemsep=2mm]
    \item We propose Nyström $M$-HSIC, an efficient HSIC estimator, which can handle more than two components and has runtime $\O\left(M{n'}^3+Mn'n\right)$, where $n$ denotes the number of samples, $n' \ll n$ stands for the number of Nyström points, and $M$ is the number of random variables whose independence is measured.
    \item We provide theoretical guarantees for Nyström $M$-HSIC: we prove that our estimator converges with rate $\O\left(n^{-1/2}\right)$ for $n' = \tilde \Theta\left( \sqrt{n} \right)$, which matches the convergence of the quadratic-time estimator.
    \item We perform an extensive suite of experiments to demonstrate the efficiency of Nyström $M$-HSIC. These applications include dependency testing of media annotations and causal discovery. In the former, we achieve similar runtime and power as existing HSIC approximations. The latter requires testing joint independence of more than two components, which is beyond the capabilities of existing HSIC accelerations. Here, the proposed algorithm achieves the same performance as the quadratic-time HSIC estimator V-HSIC with a significantly reduced runtime.
\end{enumerate}

The paper is structured as follows. Our notations are introduced in Section~\ref{sec:notations}. The existing Nyström-based HSIC approximation for two components is reviewed in Section~\ref{sec:estimation-hsic}. Our proposed method, which is capable of handling $M\ge 2$ components, is presented in Section~\ref{sec:prop-hsic-appr} together with its theoretical guarantees. In Section~\ref{sec:experiments} we demonstrate the applicability of Nyström $M$-HSIC. All the proofs of our results are available in the supplementary material.

\section{Notations} \label{sec:notations}
This section is dedicated to definitions 
and to the introduction of our target quantity Hilbert-Schmidt independence criterion (HSIC). 
In particular, we introduce the \tb{notations} $[M]$, $\langle\b v, \b w\rangle$, $\norm{\b v}{2}$, $\circ_{m\in [M]} \b A_m$, $\trace(\b A)$, $\b A^{-1}$, $\b A^{-}$, $\b A\tran$, $\norm{\b A}{\text{F}}$, $\b 1_d$, $\b I_d$, $\Span$, $\mathcal M_1^+(\X{})$, $\H_k$, 
$\mu_k$, $\mathrm{MMD}_k$, $\otimes_{m=1}^M k_m$, $\otimes_{m=1}^M\P_ m$, $\mathrm{HSIC}_{\otimes_{m=1}^M k_m}$, $C_X$, $A^{-1}$, $\opnorm{A}$, $\trace(A)$, $\mathcal N_{X}(\lambda)$, $\O_{\text{P}}\left(r_n\right)$.

For a positive integer $M$, $[M] := \{1,\dots,M\}$. The Euclidean inner product of vectors $\b v, \b w \in \R^d$ is denoted by $\langle \b v, \b w \rangle $; the Euclidean norm is $\norm{\b v}{2}:= \sqrt{\langle\b v, \b v\rangle}$. The Hadamard product of matrices $\b A_m \in \R^{d_1 \times d_2}$ of equal size ($m\in [M]$) is $\circ_{m\in [M]} \b A_m := \left[\prod_{m\in [M]} (\b A_m)_{i,j}\right]_{i\in [d_1], j\in [d_2]}$. Matrix multiplication takes precendence over the Hadamard one.  For a matrix $\b A \in \R^{d\times d}$, $\trace(\b{A}):=\sum_{i \in [d]}A_{i,i}$ denotes its trace, $\b A^{-1}$ is its inverse (assuming that $\b A$ is non-singular), and $\b A^-$ is its Moore–Penrose inverse. The transpose of a matrix $\b A\in \R^{d_1 \times d_2}$ is denoted by $\b A\tran$. The Frobenius norm of a matrix $\b A \in \R^{d_1\times d_2}$ is  $\norm{\b A}{\text{F}}:=\sqrt{\sum_{i\in [d_1],j\in [d_2]} (A_{i,j})^2}$. The $d$-dimensional vector of ones is $\b 1_d$. The $d\times d$-sized identity matrix is denoted by $\b I_d$. For a set $S$ in a vector space, $\Span(S)$ denotes the linear hull of $S$. Let $(\X{},\tau_{\X{}})$ be a topological space, and $\mathcal B(\tau_{\X{}})$ the Borel sigma-algebra induced by the topology $\tau_{\X{}}$. All probability measures in the manuscript are meant with respect to the measurable space $\left(\X{},\mathcal B(\tau_{\X{}})\right)$, and they are denoted by $\mathcal M_1^+(\X{})$. The RKHS $\H_k$ on $\X{}$ associated with a kernel $k : \X{} \times \X{} \to \R$ is the Hilbert space of functions $h : \X{} \to \R$ such that $k(\cdot, x) \in \H_k$ and $\ip{h,k(\cdot,x)}{\H_k} = h(x)$ for all $x \in \X{}$ and $h\in \H_k$.\footnote{$k(\cdot,x)$ stands for $x' \in \X{} \mapsto k(x',x) \in \R$ with $x \in \X{}$ fixed.} Kernels are assumed to be bounded (in other words, there exists $B\in \R$ such that $\sup_{x,x'\in \X{}}k(x,x')\le B$) and measurable, and $\H_k$ is assumed to be separable throughout the paper.\footnote{The separability of $\H_k$ can be guaranteed on a separable topological space $\X{}$ by  taking a continuous kernel $k$ \citep[Lemma 4.33]{steinwart08support}.} The function defined by $\phi_k(x) := k(\cdot, x)$ is the canonical feature map; with this feature map $k(x,x') = \ip{k(\cdot,x),k(\cdot,x')}{\H_k} = \ip{\phi_k(x),\phi_k(x')}{\H_k}$ for all $x, x' \in  \X{}$.
A kernel $k : \R^d \times \R^d \to \R$ is called translation-invariant if there exists a function $\kappa : \R^d\to\R$ such that $k(\b x,\b x') = \kappa(\b x-\b x')$ for all $\b x,\b x' \in \R^d$. 
The mean embedding  $\mu_k$ of a probability measure $\P{} \in \mathcal M_1^+(\X{})$ is
$\kme {} := \int_{\X {}} \phi_k(x) \d\P{}(x)$, where the integral is meant in Bochner's sense. The resulting (semi-)\-metric  is called maximum mean discrepancy (MMD):
\begin{align*}
\mathrm{MMD}_k(\P, \Q) &:= \norm{\mu_k(\P{})-\mu_k(\mathbb Q)}{\H_k},
\end{align*}
for $\P{},\Q \in \mathcal M_1^+(\X{})$. The injectivity of the mean embedding $\mu_k$ is equivalent to $\mathrm{MMD}_k$ being a metric; in this case the kernel $k$ is called characteristic. Let $X = (X_m)_{m=1}^M$ denote a random variable with distribution $\P{} \in \mathcal M_1^+(\X{})$ on the product space  $\X{} = \times_{m=1}^M \X_m$, where $\X_{m}$ is enriched with kernel $k_m : \X_{m} \times \X_{m} \to \R$. The distribution of the $m$-th marginal $X_m$ of $X$ is denoted by $\P_ m \in \mathcal M_1^+(\X_m)$; the product of these $M$ marginals is $\otimes_{m=1}^M\P_ m \in \mathcal M_1^+(\X{})$.
 The tensor product of the kernels~$(k_m)_{m=1}^M$
\begin{align*}
\otimes_{m=1}^M k_m\left((x_m)_{m=1}^M,(x'_m)_{m=1}^M\right) := \prod_{m\in[M]} k_m(x_m,x_m'), 
\end{align*}
with $x_m,x_m' \in \X_m$ ($m\in [M]$), is also a kernel; we will use the shorthand $k=\otimes_{m=1}^M k_m$. The associated RKHS has a simple structure $\H_{k} = \tphs$ \citep{berlinet04reproducing} with the r.h.s.\ denoting the tensor product of the RKHSs $(\H_{k_m})_{m=1}^M$. Indeed, for $h_m \in \H_{k_m}$, the multi-linear operator $\otimes_{m=1}^M h_m \in \tphs$ acts as $\otimes_{m=1}^Mh_m(v_1,\dots,v_M)  = \prod_{m\in [M]}\ip{h_m,v_m}{\H_{k_m}}$, where $h_m,v_m \in \H_{k_m}$. The space $\tphs$ is the closure of 
the linear combination of such $\otimes_{m=1}^M h_m$-s: 
\begin{align*}
\tphs = \widebar{\Span}\left(\otimes_{m=1}^M h_m\, :\, h_m\in \H_{k_m}, m\in [M]\right),
\end{align*}
where the closure is meant w.r.t.\ to the (linear extension of the) inner product defined as 
\begin{align}
    \MoveEqLeft \left\langle \otimes_{m=1}^M a_m, \otimes_{m=1}^M b_m \right\rangle_{\tphs} \hspace{-0.1cm} := \nonumber \\
    & = \prod_{m\in [M]} \hspace{-0.1cm} \left\langle a_m,b_m\right\rangle_{\H_{k_m}},\quad a_m,b_m\in \H_{k_m}. \label{eq:tensor:inner-product} 
\end{align}
Specifically,  \eqref{eq:tensor:inner-product} implies that 
\begin{align}
\left\| \otimes_{m=1}^M a_m\right\|_{\tphs} & = \prod_{m\in [M]} \left\| a_m\right\|_{\H_{k_m}}. \label{eq:tensor:norm}
\end{align}
One can define an independence measure, the so-called Hilbert-Schmidt independence criterion based on $k$ as 
\begin{align}
  \label{eq:def-hsic}
  \mathrm{HSIC}_k(\P{}) := \mathrm{MMD}_{k}\left(\P{},\prodmarginals\right) = \norm{C_X}{\H_k},
\end{align}
where $C_X := \mu_{k}(\P{}) - \mu_{k}\left(\prodmarginals\right)$ is the centered cross-covariance operator.

Let $A : \H_k \to \H_k$ be a bounded linear operator. Its inverse  (provided that it exists) $A^{-1}: \H_k \to \H_k$ is also bounded linear. The operator norm of $A$ is defined as $\opnorm{A} := \sup_{\norm{h}{\H_k}=1} \norm{Ah}{\H_k}$. As $\H_k$ is separable, it has a countable orthonormal basis $(e_j)_{j\in J}$. $A$ is called trace-class if $\sum_{j\in J}\ip{(A^*A)^{\frac{1}{2}} e_j, e_j}{\H_k}<\infty$ where $(\cdot)^*$ denotes the adjoint, and in this case $\trace(A):= \sum_{j\in J}\ip{A e_j, e_j}{\H_k} < \infty$ is called the trace of $A$.
For $\P{} \in  \mathcal M_1^+(\X{})$, kernel $k:\X{}\times \X{} \rightarrow \R$ and $\lambda >0$, the  uncentered covariance operator  is $\mu_{k\otimes k}(\P{}):=\int_{\X} k(\cdot, x) \otimes k(\cdot,x) \d \P(x)$ and its regularized variant is $\mu_{k\otimes k,\lambda}(\P{}) := \mu_{k\otimes k}(\P{}) + \lambda I$, respectively, where $I$ denotes the identity operator. Let $\mathcal N_x(\lambda)= \ip{\phi_k(x),\mu^{-1}_{k\otimes k,\lambda}(\P{})\phi_k(x)}{\H_{k}}$. The effective dimension of $X \sim \P{}$ is defined as $\mathcal N_{X}(\lambda) := \E_{x\sim \P{}}\left[\mathcal N_{x}(\lambda)\right] = \trace\left(\mu_{k\otimes k}(\P{})\mu_{k\otimes k,\lambda}^{-1}(\P{})\right)$. For a sequence of $r_n>0$-s and a sequence of real-valued random variables $X_n$, $X_n = \O_{\text{P}}(r_n)$ denotes that $\frac{X_n}{r_n}$ is bounded in probability.

\section{Existing HSIC Estimators}
\label{sec:estimation-hsic}

We recall the existing HSIC estimator V-HSIC in Section~\ref{sec:class-hsic-estim}, and its Nyström approximation for two components in Section~\ref{sec:nystr-estim-m=2}. We present our proposed Nyström approximation for more than two components in Section~\ref{sec:prop-hsic-appr}.

\subsection{Classical HSIC Estimator (V-HSIC)} \label{sec:class-hsic-estim}

Given an i.i.d.\ sample of $M$-tuples of size $n$
\begin{align}
\Psamp_{n} := \left\{\left(x_1^1, \dots, x_M^1\right), \dots, \left(x_1^n, \dots, x_M^n\right)\right\} \subset \X{} \label{eq:sample-of-m-tuplets}
\end{align}
drawn from $\P{}$, the corresponding  empirical estimate of the squared HSIC, obtained by replacing the population means with the sample means, gives rise to the V-statistic based estimator
\begin{eqnarray}
 \lefteqn{0\le\HSIC_{k}^2\left(\hat \P_n\right) := \frac {1}{n^2}\ \bm 1_n\tran\left(\hadamard_{m\in[M]}\mathbf{K}_{k_m}\right) \bm 1_n} \label{eq:emp-hsic}\\
 && \hspace{-0.6cm}+ \frac{1 }{n^{2M}} \prod_{m\in[M]}\bm 1_n\tran \mathbf{K}_{k_m} \bm 1_n - \frac {2}{n^{M+1}}\bm 1_n\tran \left( \hadamard_{m\in[M]}\mathbf{K}_{k_m} \bm 1_n\right) \nonumber
 \end{eqnarray}
with Gram matrices
\begin{align}
\mathbf{K}_{k_m} = \left[k_m\left(x_m^i,x_m^j\right)\right]_{i,j \in [n]} \in \R^{n\times n},\label{eq:gram-matrix}
\end{align}
which can be computed in $O(n^2M)$.\footnote{$\HSIC_{k}^2(\hat \P_n)$ denotes the application of $\HSIC_k^2$ to the empirical measure $\hat \P_n$. $\HSIC^2_{k,\text N_0}(\hat \P_n)$ and $\HSIC_{k,\text N}^2(\Psamp_{n})$ indicate dependence on $\hat \P_n$. Similarly, $\mu_{\ell}(\hat \Q_{n})$ stands for application, $\mu_\ell(\tilde \Q_{n'})$, $\mu_{k_m}(\tilde \P_{m,n'})$ and $\mu_{k}(\tilde \P_{n'})$ indicate dependence on the argument. \label{footnote:app-vs-dependence}}
This prohibitive runtime inspired the development of HSIC approximations \citep{zhang18large} using the Nyström method and random Fourier features. We review the Nyström-based construction in Section~\ref{sec:nystr-estim-m=2} and explain why the technique is restricted to $M=2$ components, before presenting our alternative approximation scheme of HSIC in Section~\ref{sec:prop-hsic-appr} which is capable of handling $M\geq2$ components.

\subsection{Nyström Method} \label{sec:nystr-estim-m=2}
In this section, we recall the existing Nyström approximation, which can handle $M=2$ components.

The expression \eqref{eq:emp-hsic} can be rewritten \citep{gretton05measuring} for $M=2$ components as 
\begin{align}
  \label{eq:hsic-two-components}
  \HSIC^2_{k}\left(\Psamp_{n}\right) = \frac{1 }{n^2}\trace\left( \mathbf{H  K}_{k_1} \mathbf{ HK}_{k_2} \right),
\end{align}
with the centering matrix $ \mathbf{H} =  \mathbf{I}_n - \frac 1 n \bm 1_n \bm 1_n \tran \in \R^{n\times n}$, Gram matrices $\mathbf{K}_{k_1},\,\mathbf{K}_{k_2}$ defined in \eqref{eq:gram-matrix}, and sample $\Psamp_n := \left\{(x_1^1,x_2^1),\dots,(x_1^n,x_2^n)\right\}$ as in \eqref{eq:sample-of-m-tuplets} with $M=2$. The naive computation of \eqref{eq:hsic-two-components} costs $\O\left(n^3\right)$. However, noticing that $\trace(\b A\tran \b B) = \sum_{i,j\in[n]}  A_{i,j } B_{i,j}$, the computational complexity reduces to $\O\left(n^2\right)$. The quadratic complexity can be reduced by the Nyström approximation\textsuperscript{\ref{footnote:app-vs-dependence}} \citep{zhang18large} 
\begin{align}
\label{eq:hsic-ny-plugin}
\begin{split}
  \HSIC^2_{k,\text N_0}\left(\hat \P_n\right) &= \frac{1}{n^2}\trace\left(\mathbf{HK}^{\nys}_{k_1}\mathbf{HK}^{\nys}_{k_2}\right) \\
  &\stackrel{(*)}{=} \frac {1}{n^2}\norm{\left(\mathbf{H}\phi_{k_1}^{\nys}\right)\tran \mathbf{H}\phi_{k_2}^{\nys}}{\text F}^2,
\end{split}
\end{align}
which we detail in the following. The Nyström approximation relies on a subsample of size $n' \leq n$ of $\Psamp_n$, which we denote by $\tilde\P_{n'} :=  \left\{\big(\tilde x_1^1, \tilde x_2^1\big), \dots,  \big(\tilde x_1^{n'}, \tilde x_2^{n'}\big)\right\}$; the tilde indicates a relabeling. The subsample allows to define three matrices
\begin{align}
  \label{eq:matrix-definitions}
  \begin{split}
      \mathbf{K}_{k_m,n'n'} &= \left[k_m\left(\tilde x_m^i,\tilde x_m^j\right)\right]_{i,j\in[n']} \in \R^{n'\times n'}, \\
  \mathbf{K}_{k_m,nn} &= \mathbf{K}_{k_m} \in \R^{n\times n}, \\
  \mathbf{K}_{k_m,n'n} &= \left[k_m(\tilde x_m^i, x_m^j)\right]_{i\in[n'],j\in[n]} \in \R^{n'\times n}, 
  \end{split}
\end{align}
where $m\in [2]$ and $\b K_{k_m}$ is defined in \eqref{eq:gram-matrix}, and let $\mathbf{K}_{k_m,nn'} = \mathbf{K}_{k_m,n'n}\tran \in \R^{n\times n'}$. The matrices  $\b K_{k_m}^{\nys}$ $(m \in [2])$ as used in \eqref{eq:hsic-ny-plugin} are 
\begin{align*}
  \mathbf{K}_{k_m}^{\nys} &:= \mathbf{K}_{k_m,nn'}\mathbf{K}_{k_m,n'n'}^{-1}\mathbf{K}_{k_m,n'n}
  \\ &= \underbrace{\mathbf{K}_{k_m,nn'}\mathbf{K}_{k_m,n'n'}^{-\frac 1 2 }}_{=:\phi_{k_m}^{\nys} \in \R^{n\times n'}} \big(\underbrace{\mathbf{K}_{k_m,nn'}\mathbf{K}_{k_m,n'n'}^{-\frac 1 2 }}_{\phi_{k_m}^{\nys}}\big)\tran  \hspace{-0.1cm}\in \R^{n\times n}, 
\end{align*}
provided that the inverse $\mathbf{K}_{k_m,n'n'}^{-1}$ exists.
In \eqref{eq:hsic-ny-plugin} the r.h.s.\ of $(*)$ has a computational complexity of $\O({n'}^3 + nn'^2)$,\footnote{This follows from the complexity of $O({n'}^3)$ of inverting an $n' \times n'$ matrix and the complexity of multiplying both feature representations \citep{zhang18large}.} which is smaller than $\O\left(n^2\right)$ of \eqref{eq:hsic-two-components},  provided that $n' < \sqrt n$; this speeds up the computation. $(*)$ relies on the  cyclic invariance property of the trace, and the idempotence of $\b H$ (in other words, $\b H \b H = \b H$), limiting the above derivation to $M=2$ components; the approach does not extend naturally to the case of $M>2$.

\section{Proposed HSIC Estimator}
\label{sec:prop-hsic-appr}
We now elaborate the proposed Nyström HSIC approximation for $M\geq 2$ components.

Recall that the centered cross-covariance operator takes the form
\begin{align}
  C_X &= \mu_{k}(\P{}) - \mu_{k}\left(\prodmarginals\right)  \nonumber \\ 
  &
  = \mu_k(\P{}) - \otimes_{m=1}^M\mu_{k_m}\left(\P_m\right). \label{eq:centered-cross-cov}
\end{align}
There are $M+1$ expectations in this expression; we  estimate these mean embeddings separately. This conceptually simple construction, is to the best of our knowledge, the first that handles $M\ge 2$ components, and it allows to leverage recent bounds on mean estimators (Lemma~\ref{lemma:nystrom-mean-embedding}).
We first detail the general Nyström method for approximating expectations  $\int_{\Y{}}\phi_\ell(y)\d\Q{} (y) $ associated to a kernel $\ell : \Y \times \Y \to \R$ and probability distribution $\mathbb Q \in \mathcal M_1^+(\Y{})$. One can then choose
\begin{align}
\begin{split}
    (\Y, \ell, \Q) &= (\X, k, \P{}), \text{ and }\\ (\Y, \ell, \Q) &= (\X_m, k_m, \P_{m}), \quad m\in[M],
\end{split}
\label{eq:triplets}
\end{align}
to achieve our goal.

Let $\tilde \Q_{n'} = \left\{\tilde y^1,\dots,\tilde y^{n'}\right\}$ be a subsample (with replacement) of $\hat \Q_n = \left\{y^1,\dots,y^n\right\} \stackrel{\text{i.i.d.}}{\sim} \Q$ referred to as Nyström points; the tilde again indicates relabeling. 
The usual estimator of the mean embedding replaces the population mean with its empirical counterpart over $n$ samples\textsuperscript{\ref{footnote:app-vs-dependence}}
\begin{align*}
  \mu_{\ell}(\Q) = \int_{\Y{}}\phi_\ell(y)\d\Q{} (y) 
   \approx \frac 1 n \sum_{i\in[n]} \phi_\ell(y^i) =  \mu_{\ell}(\hat \Q_{n}).
\end{align*}
Instead, the Nyström approximation uses a weighted sum with weights  $\alpha_i \in \R$ ($i \in [n']$): given $n'$ Nyström points, the estimator takes the form\textsuperscript{\ref{footnote:app-vs-dependence}}
\begin{align*}
  \mu_{\ell}(\Q{})   &\approx \sum_{i\in[n' ]}\alpha_i \phi_\ell(\tilde y^i) = \mu_{\ell}\left(\tilde \Q_{n'}\right) \in  \mathcal H_\ell^{\nys},
\end{align*}
where $\H_\ell^{\nys} := \Span\left(\phi_\ell\big(\tilde y^i\big)\,:\,i\in [n']\right) \subset \mathcal H_\ell$.
The coefficients $\bm \alpha_{\ell} = (\alpha_{\ell}^1,\dots,\alpha_{\ell}^{n'})\tran \in \R^{n'}$  are obtained by the minimum norm solution of 
\begin{align}
 \min_{\bm\alpha_\ell \in\R^{n'}}\norm{\mu_{\ell}\left({\hat \Q_n}\right) -\sum_{i\in[n' ]}\alpha_i \phi_\ell\left(\tilde y^i\right)}{\H_\ell}^2. \label{eq:optim-prob}
\end{align}
The following lemma  describes the solution of \eqref{eq:optim-prob}.

\begin{lemma}[Nyström mean embedding, \citet{chatalic22nystrom}] 
  \label{lemma:nystrom-mean-embedding}
For a kernel $\ell$ with corresponding feature map $\phi_\ell$, an i.i.d.\  sample $\hat\Q_n$ of distribution $\Q{}$, and a subsample $\tilde \Q_{n'}$ of $\hat \Q_n$, the Nyström estimate of $\mu_{\ell}(\Q{})$ is given by
  \begin{align}
    \mu_{\ell}\left(\tilde \Q_{n'}\right) &= \sum_{i \in[n']}\alpha_\ell^i\phi_\ell\left(\tilde y^i\right), \nonumber\\
    \bm \alpha_\ell &= \frac 1 n \left(\mathbf{ K}_{\ell, n'n'}\right)^{-}\mathbf{K}_{\ell, n'n}\bm 1_n  \label{eq:alpha-k},
  \end{align}
  with Gram matrix  $\mathbf{K}_{\ell,n'n'} = \left[\ell(\tilde x^i,\tilde x^j)\right]_{i,j\in[n']} \in \R^{n'\times n'}$, and  $\mathbf{K}_{\ell,n'n} = \left[ \ell(\tilde x^i, x^j) \right]_{i\in[n'],j\in[n]} \in \R^{n'\times n}$.
\end{lemma}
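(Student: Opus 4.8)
The plan is to treat \eqref{eq:optim-prob} as a finite-dimensional least-squares problem: the objective is a convex quadratic in the coefficient vector $\bm\alpha_\ell$, so its minimizers coincide exactly with the solutions of the associated normal equations, and the minimum-norm minimizer is then read off via the Moore--Penrose inverse.

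First I would expand the squared RKHS norm in \eqref{eq:optim-prob}. Writing $\mu_\ell(\hat\Q_n) = \frac1n\sum_{j\in[n]}\phi_\ell(y^j)$ and using bilinearity of $\ip{\cdot,\cdot}{\H_\ell}$ together with the reproducing property $\ip{\phi_\ell(y),\phi_\ell(y')}{\H_\ell} = \ell(y,y')$, the cross term yields $\ip{\mu_\ell(\hat\Q_n),\phi_\ell(\tilde y^i)}{\H_\ell} = \frac1n\sum_{j\in[n]}\ell(\tilde y^i,y^j) = \frac1n\left(\mathbf K_{\ell,n'n}\bm1_n\right)_i$, while the quadratic term yields the Gram matrix $\left(\ip{\phi_\ell(\tilde y^i),\phi_\ell(\tilde y^j)}{\H_\ell}\right)_{i,j} = \mathbf K_{\ell,n'n'}$. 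Dropping the $\bm\alpha_\ell$-independent constant $\norm{\mu_\ell(\hat\Q_n)}{\H_\ell}^2$, the objective reduces to $\bm\alpha_\ell\tran \mathbf K_{\ell,n'n'}\bm\alpha_\ell - \frac2n\,\bm\alpha_\ell\tran \mathbf K_{\ell,n'n}\bm1_n$. Setting its gradient to zero gives the normal equations $\mathbf K_{\ell,n'n'}\bm\alpha_\ell = \frac1n\mathbf K_{\ell,n'n}\bm1_n$.

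The delicate step, and the one I expect to be the main obstacle, is that $\mathbf K_{\ell,n'n'}$ need not be invertible, since the Nyström features $\phi_\ell(\tilde y^i)$ may be linearly dependent; hence I cannot simply invert and must justify both consistency and the minimum-norm selection. I would resolve this with a feature-map factorization. Let $\Phi\colon\R^{n'}\to\H_\ell$ send $\bm\alpha\mapsto\sum_{i\in[n']}\alpha_i\phi_\ell(\tilde y^i)$, so that $\mathbf K_{\ell,n'n'} = \Phi^*\Phi$ and the right-hand side reads $\frac1n\mathbf K_{\ell,n'n}\bm1_n = \Phi^*\mu_\ell(\hat\Q_n)$. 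Since $\mathbf K_{\ell,n'n'}$ is self-adjoint one has $\mathrm{range}(\Phi^*\Phi) = \ker(\Phi^*\Phi)^\perp = \ker(\Phi)^\perp = \mathrm{range}(\Phi^*)$, so the right-hand side lies in the range of $\mathbf K_{\ell,n'n'}$ and the normal equations are consistent.

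Finally, because the objective is convex, its minimizer set equals the solution set of the normal equations; among these the minimum Euclidean-norm element is exactly $\mathbf K_{\ell,n'n'}^{-}\left(\frac1n\mathbf K_{\ell,n'n}\bm1_n\right)$, which is the claimed $\bm\alpha_\ell$. This last identification—that the Moore--Penrose inverse applied to a consistent system returns the minimum-norm solution—is the standard property I would invoke to close the argument.
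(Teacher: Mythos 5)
Your proof is correct. Note that the paper itself gives no proof of this lemma: it is imported verbatim from \citet{chatalic22nystrom} (and accordingly is not listed among the results proved in the appendix), so there is nothing internal to compare against. Your derivation is the standard one and is complete: expanding the RKHS norm via the reproducing property reduces \eqref{eq:optim-prob} to the convex quadratic $\bm\alpha\tran \mathbf K_{\ell,n'n'}\bm\alpha - \tfrac{2}{n}\bm\alpha\tran\mathbf K_{\ell,n'n}\bm 1_n$, whose minimizers are exactly the solutions of the normal equations; you correctly identify the only delicate point, namely that $\mathbf K_{\ell,n'n'}$ may be singular, and your factorization $\mathbf K_{\ell,n'n'}=\Phi^*\Phi$ with $\mathrm{range}(\Phi^*\Phi)=\mathrm{range}(\Phi^*)$ establishes consistency of that system, after which the Moore--Penrose inverse returning the minimum-norm solution of a consistent system gives \eqref{eq:alpha-k}. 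This is exactly the justification needed for the paper's phrase ``minimum norm solution of \eqref{eq:optim-prob}'' and for the use of $(\cdot)^-$ rather than $(\cdot)^{-1}$ there.
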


Let
\begin{align}
\tilde\P_{n'} = \left\{\Big(\tilde x_1^1,\dots,\tilde x_M^1\Big),\dots,\Big(\tilde x_1^{n'},\dots,\tilde x_M^{n'}\Big)\right\} \label{eq:Nystrom-samples}
\end{align}
be a subsample (with replacement) of
$
\Psamp_n = \left\{\left(x_1^1,\dots,x_M^1\right),\dots,\left(x_1^n,\dots,x_M^n\right)\right\}
$ defined in \eqref{eq:sample-of-m-tuplets}, and
\begin{align} \label{eq:tildeP:m,n'}
\tilde\P_{m,n'} = \left\{\tilde x_m^1,\dots,\tilde x_m^{n'}\right \}
\end{align}
be the corresponding subsample of the $m$-th marginal ($m\in [M]$).
Using our choice \eqref{eq:triplets} with Lemma~\ref{lemma:nystrom-mean-embedding}, the estimators for the embeddings of marginal distributions take the form\textsuperscript{\ref{footnote:app-vs-dependence}}
\begin{align}
    \mu_{k_m}\left(\tilde \P_{m,n'}\right) &= \sum_{i \in [n']}\alpha_{k_m}^i \phi_{k_m}\left(\tilde x_m^i\right), \nonumber\\
  \bm \alpha_{k_m} &= \frac 1 n \left( \mathbf{K}_{k_m,n'n'}\right)^{-} \mathbf{K}_{k_m,n'n} \bm 1_n,  \label{eq:alpha-marginal}
\end{align}
and the estimator of the mean embedding of the joint distribution is\textsuperscript{\ref{footnote:app-vs-dependence}}
\begin{align}
  \mu_{k}\left(\tilde \P_{n'}\right) &= \sum_{i \in [n']} \alpha_k^i \otimes_{m=1}^M \phi_{k_m}\left(\tilde x_m^i\right), \nonumber
    \\\bm \alpha_{k} &=  \frac 1 n \left(  \mathbf{K}_{k,n'n'} \right)^{-}\left( \mathbf{K}_{k,n'n}\right) \bm 1_n \nonumber  
  \\
  & \stackrel{(*)}{=}  \frac 1 n \overbrace{\Big(\underbrace{\hadamard_{m\in[M]} \mathbf{K}_{k_m,n'n'}}_{(a)}\Big)^{-}}^{(c)} \times \nonumber \\ &\quad\hspace{.55cm} \Big(\underbrace{\hadamard_{m\in[M]}  \mathbf{K}_{k_m,n'n}}_{(b)}\Big) \bm 1_n \label{eq:alpha-prod},
\end{align}
where $(*)$ holds as for the Gram matrix $\b K_{k,n'n'}$ associated with the product kernel $k = \otimes_{m\in [M]}k_m$ one has 
\begin{align*}
  \mathbf{K}_{k,n'n'} &= \left[ k\left((\tilde x_1^i,\dots,\tilde x_M^i),(\tilde x_1^j,\dots,\tilde x_M^j)\right)\right]_{i,j\in[n']} \\ &= \left[\prod_{m\in[M]}k_m(\tilde x_m^i,\tilde x_m^j)\right]_{i,j\in[n']} \hspace{-0.6cm}= \hadamard_{m\in[M]}\mathbf{K}_{k_m,n'n'}, 
\end{align*}
and similarly $\mathbf{K}_{k,n'n} = \hadamard_{m\in[M]}\mathbf{K}_{k_m,n'n}$, with $\mathbf{K}_{k_m,n'n'}$ and $\mathbf{K}_{k_m,n'n}$ defined in \eqref{eq:matrix-definitions}.

Combining the $M+1$ Nyström estimators in \eqref{eq:alpha-marginal} and in \eqref{eq:alpha-prod} gives rise to the overall Nyström HSIC estimator, which is elaborated in the following lemma.

\begin{lemma}[Computation of Nyström $M$-HSIC]
  \label{thm:nystroem-hsic}
  The Nyström estimator for HSIC  can be expressed as\textsuperscript{\ref{footnote:app-vs-dependence}}
  \begin{eqnarray}
    \label{eq:nyström-hsic}
    \lefteqn{\HSIC_{k,\text N}^2\left(\Psamp_{n}\right) = \bm \alpha_k\tran\left( \hadamard_{m\in[M]} \mathbf{K}_{k_m,n'n'}\right) \bm \alpha_k } \\
                                               && \hspace{-0.8cm} + \hspace{-0.2cm}\prod_{m\in[M]} \bm \alpha_{k_m}\tran  \mathbf{K}_{k_m,n'n'} \bm\alpha_{k_m} 
                                                \hspace{-0.1cm} - 2 \bm\alpha_k\tran \left ( \hadamard_{m\in[M]}  \mathbf{K}_{k_m,n'n'}\bm\alpha_{k_m} \right), \nonumber
  \end{eqnarray}
  with $\bm \alpha_{k_m}$ and $\bm \alpha_{k}$  defined in \eqref{eq:alpha-marginal} and  \eqref{eq:alpha-prod}, respectively, $\mathbf{K}_{k_m,n'n'}$ is defined in \eqref{eq:matrix-definitions}, and $N$ in the subscript of the estimator refers to Nyström. Note that \eqref{eq:nyström-hsic} depends on $\hat \P_{n}$ as one must solve \eqref{eq:optim-prob}.
\end{lemma}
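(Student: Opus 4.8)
The plan is to start from the definition of the estimator as the squared $\H_k$-norm of the plug-in centered cross-covariance operator, obtained by substituting the Nyström mean-embedding estimates from \eqref{eq:alpha-marginal} and \eqref{eq:alpha-prod} into \eqref{eq:def-hsic}:
\begin{align*}
\HSIC_{k,\text N}^2\left(\Psamp_n\right) = \norm{\kmeP - \otimes_{m=1}^M \kmePm}{\H_k}^2.
\end{align*}
Expanding this squared norm of a difference via bilinearity of the inner product yields three terms with coefficients $1,1,-2$, and I would show that these coincide, one by one, with the three summands on the right-hand side of \eqref{eq:nyström-hsic}.

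The core mechanism in every term is to pass from the inner product on the product RKHS $\H_k = \tphs$ to products over $m\in[M]$ of inner products on the factors $\H_{k_m}$ using \eqref{eq:tensor:inner-product}, and then to collapse each factor $\ip{\phi_{k_m}(x),\phi_{k_m}(x')}{\H_{k_m}} = k_m(x,x')$ into a Gram-matrix entry by the reproducing property. For the first (joint--joint) term, writing $\kmeP = \sum_{i\in[n']}\alpha_k^i \otimes_{m=1}^M \phi_{k_m}(\tilde x_m^i)$ and applying \eqref{eq:tensor:inner-product} gives $\sum_{i,j} \alpha_k^i \alpha_k^j \prod_{m\in[M]} k_m(\tilde x_m^i,\tilde x_m^j)$; recognizing $\prod_{m\in[M]} k_m(\tilde x_m^i,\tilde x_m^j) = \big(\hadamard_{m\in[M]}\mathbf{K}_{k_m,n'n'}\big)_{i,j}$ recovers $\bm\alpha_k\tran\big(\hadamard_{m\in[M]}\mathbf{K}_{k_m,n'n'}\big)\bm\alpha_k$. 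For the second (marginal--marginal) term, \eqref{eq:tensor:inner-product} factorizes the norm of the tensor product into $\prod_{m\in[M]}\ip{\kmePm}{\H_{k_m}}$-type factors, and each factor equals $\bm\alpha_{k_m}\tran \mathbf{K}_{k_m,n'n'}\bm\alpha_{k_m}$.

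The step requiring the most care is the cross term $\ip{\kmeP,\otimes_{m=1}^M\kmePm}{\H_k}$, since it pairs the single sum over $i$ defining the joint embedding against a genuine tensor product of the $M$ marginal embeddings. Applying \eqref{eq:tensor:inner-product} to each summand turns it into $\sum_{i\in[n']}\alpha_k^i \prod_{m\in[M]}\ip{\phi_{k_m}(\tilde x_m^i),\kmePm}{\H_{k_m}}$, and each inner factor reduces to $\big(\mathbf{K}_{k_m,n'n'}\bm\alpha_{k_m}\big)_i$. The point to get right is that the index $i$ is shared across all $M$ factors while the summation defining each $\kmePm$ is internal to its own factor; consequently the product over $m$ of the $i$-th entries is exactly the $i$-th entry of the Hadamard product $\hadamard_{m\in[M]}\mathbf{K}_{k_m,n'n'}\bm\alpha_{k_m}$, so the whole cross term collapses to $\bm\alpha_k\tran\big(\hadamard_{m\in[M]}\mathbf{K}_{k_m,n'n'}\bm\alpha_{k_m}\big)$. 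Combining the three terms then gives \eqref{eq:nyström-hsic}. No analytic difficulty arises beyond this index bookkeeping; the only structural facts used are bilinearity, the tensor inner-product identity \eqref{eq:tensor:inner-product}, the reproducing property, and the Hadamard factorization of the product-kernel Gram matrices already recorded after \eqref{eq:alpha-prod}.
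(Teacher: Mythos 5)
Your proposal is correct and follows essentially the same route as the paper's proof: expand $\norm{\kmeP-\otimes_{m=1}^M\kmePm}{\H_k}^2$ into the three terms with coefficients $1,1,-2$, and reduce each via the tensor inner-product identity \eqref{eq:tensor:inner-product} and the reproducing property, with the cross term handled by exactly the shared-index/Hadamard observation you describe. No differences worth noting.
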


\begin{remark}~
  \label{remark:nystroem-hsic}
  \begin{itemize}[labelindent=0em,leftmargin=0.85em,topsep=0cm,partopsep=0cm,parsep=0cm,itemsep=2mm]
  \item \textbf{Uniform weights, no subsampling.} The estimator \eqref{eq:nyström-hsic} gives back \eqref{eq:emp-hsic} when $\bm \alpha_k :=  \bm \alpha_{k_m} := \frac 1 n \bm 1_n$ for all $m\in[M]$, and when there is no subsampling applied.
  
  \item \textbf{Runtime complexity.}   In order to determine the computational complexity of \eqref{eq:nyström-hsic} one has to find that of \eqref{eq:alpha-prod}; that of \eqref{eq:alpha-marginal} follows by choosing $M=1$ in \eqref{eq:alpha-prod}.
    $(a)$ and $(b)$ in \eqref{eq:alpha-prod} are Hadamard products; hence their computational complexity is $\O\left(M{n'}^2\right)$ and $\O\left(Mnn'\right)$. $(c)$ in \eqref{eq:alpha-prod} is the Moore-Penrose inverse of an $n' \times n'$ matrix; thus its complexity is $\O\left({n'}^3\right)$. Hence, the computation of $\bm \alpha_k$ costs $\O\big(M{n'}^2 + {n'}^3+Mn'n \big)$, and that of $(\bm \alpha_{k_m})_{m=1}^M$ is  $\O\big({n'}^2 + {n'}^3+n'n \big)$ for each $m\in[M]$.
    In \eqref{eq:nyström-hsic} each term can be computed in $\O\left(M {n'}^2\right)$. Overall the Nyström $M$-HSIC estimator has complexity  $\O\big(M{n'}^2 + M{n'}^3+Mn'n \big) = \O\big(M{n'}^3+Mn'n \big)$.
    
  \item \textbf{Difference compared to the estimator by \citet{zhang18large}.} For $M=2$, \eqref{eq:nyström-hsic} reduces to
    \begin{eqnarray}
      \label{eq:ours-two-components}
      \lefteqn{
      \HSIC_{k,\text N}^2\left( \Psamp_{n}\right) = \bm\alpha_k\tran\left(\circ_{i\in[2]}\mathbf{K}_{k_i,n'n'}\right) \bm\alpha_k  } \\
 &&\hspace{-0.5cm}+ \bm\prod_{i\in[2]}\alpha_{k_i}\tran \mathbf{K}_{k_i,n'n'} \bm\alpha_{k_i}  -2 \bm\alpha_k\tran\left(\circ_{i\in[2]} \mathbf{K}_{k_i,n'n'}\bm\alpha_{k_i}\right). \nonumber
    \end{eqnarray}
    Using the equivalence of \eqref{eq:emp-hsic} and  \eqref{eq:hsic-two-components} in case $M=2$ gives
    \begin{eqnarray*}
      \lefteqn {\trace\left(\mathbf{HK}_{k_1}\mathbf{HK
      }_{k_2}\right) = \frac{1}{n^2}\bm1_n\tran\left(\mathbf{K}_{k_1} \circ \mathbf{K}_{k_2}\right)\bm 1_n} \\
      &&\hspace{-0.5cm}+ \frac{1 }{n^4}\prod_{i\in[2]}\bm 1_n\tran \mathbf{K}_{k_i} \bm 1_n - \frac{2}{n^3}\bm 1_n \tran\left(\mathbf{K}_{k_1}\bm1_n\circ \mathbf{K}_{k_2}\bm 1_n\right),
    \end{eqnarray*}
    hence \eqref{eq:hsic-ny-plugin} becomes
    \begin{eqnarray}
      \label{eq:zhang-extended-form}
      \lefteqn{\HSIC_{k,\text N_0}^2\left( \Psamp_{n}\right)  = \frac{1}{n^2}\bm1_n\tran\left(\mathbf{K}_{k_1}^{\nys} \circ \mathbf{K}_{k_2}^{\nys}\right)\bm 1_n} \\ 
      &&\hspace{-0.5cm}+ \frac{1 }{n^4} \prod_{i\in [2]}\bm 1_n\tran \mathbf{K}_{k_i}^{\nys} \bm 1_n - \frac{2}{n^3}\bm 1_n \tran\left(\mathbf{K}_{k_1}^{\nys}\bm1_n\circ \mathbf{K}_{k_2}^{\nys}\bm 1_n\right).\nonumber
    \end{eqnarray}
    The estimators \eqref{eq:ours-two-components} and \eqref{eq:zhang-extended-form} are identical if $\bm \alpha_k =  \bm \alpha_{k_m} = \frac 1 n \bm 1_n$ for all $m\in[M]$ and when there is no subsampling; in the general case they do not coincide. In \eqref{eq:hsic-ny-plugin} the dominant term in the complexity is $\left(n'\right)^2 n$ (since $n' < n$), this reduces to $n'n$ in our proposed estimator \eqref{eq:nyström-hsic}.

  \end{itemize}
\end{remark}

Key to showing the  consistency of the proposed Nyström $M$-HSIC estimator \eqref{eq:nyström-hsic} (Proposition~\ref{thm:error-nystrom-hsic})
is our next lemma, which describes how the Nyström approximation error of the mean embeddings of the components ($d_{k_m}$ below) can be propagated through tensor products.

\begin{lemma}[Error propagation on tensor products]
  \label{lemma:decomposition}
  Let $X=(X_m)_{m=1}^M  \in \X = \times_{m=1}^M \X_m$, $k_m: \X_m \times \X_m \rightarrow \R$ bounded kernels ($\exists a_{k_m} \in (0,\infty)$ such that $\sup_{x_m\in \X_m}\sqrt{k_m(x_m,x_m)} \leq a_{k_m}$, $m\in [M]$), $k=\otimes_{m=1}^M k_m$, $\H_k$ the RKHS associated to $k$,  $X\sim\P{}\in \mathcal M_1^+(\X{})$, $\P_m$ the $m$-th marginal of $\P$ ($m\in [M]$), $n'\le n$, and $\tilde\P_{m,n'}$ defined according to \eqref{eq:tildeP:m,n'}.
   Then
    \begin{eqnarray*}
      \lefteqn{\norm{\otimes_{m=1}^M \mu_{k_m}\left(\P_m\right) - \otimes_{m=1}^M \mu_{k_m}\left(\tilde\P_{m,n'}\right)}{\H_k} \le}\\
      &&\le \prod_{m\in[M]}\left(a_{k_m}+d_{k_m}\right) - \prod_{m\in[M]}a_{k_m},
    \end{eqnarray*}
    where $d_{k_m} = \norm{\mu_{k_m}\left(\P_m\right) -  \mu_{k_m}\left(\tilde \P_{m,n'}\right)}{\H_{k_m}}$.
  \end{lemma}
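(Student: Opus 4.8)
The plan is to reduce the tensor-product estimate to a telescoping sum of elementary tensors, each of which has a factorizable norm by \eqref{eq:tensor:norm}, and then to recognize the resulting bound as the expansion of a product difference. Throughout I abbreviate $a_m := \mu_{k_m}(\P_m)$ and $\hat a_m := \kmePm$, so that $d_{k_m} = \norm{a_m - \hat a_m}{\H_{k_m}}$ and the target is to bound $\norm{\otimes_{m=1}^M a_m - \otimes_{m=1}^M \hat a_m}{\H_k}$.

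First I would record two norm bounds on the factors. Jensen's inequality in Bochner's sense gives $\norm{a_m}{\H_{k_m}} = \norm{\int_{\X_m}\phi_{k_m}\,\d\P_m}{\H_{k_m}} \le \int_{\X_m}\sqrt{k_m(x_m,x_m)}\,\d\P_m \le a_{k_m}$, using the assumed boundedness of $k_m$. For the Nyström factor, the triangle inequality yields $\norm{\hat a_m}{\H_{k_m}} \le \norm{a_m}{\H_{k_m}} + \norm{a_m - \hat a_m}{\H_{k_m}} \le a_{k_m} + d_{k_m}$ by the definition of $d_{k_m}$. Next I would invoke the telescoping identity for tensor products,
\begin{align*}
\otimes_{m=1}^M a_m - \otimes_{m=1}^M \hat a_m = \sum_{j=1}^M \left(\otimes_{m<j} \hat a_m\right)\otimes(a_j - \hat a_j)\otimes\left(\otimes_{m>j} a_m\right),
\end{align*}
verified by successively replacing one factor $a_m$ by $\hat a_m$ and noting that consecutive terms cancel. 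Applying the triangle inequality in $\H_k$ and then the norm factorization \eqref{eq:tensor:norm} to each (elementary) summand gives
\begin{align*}
\norm{\otimes_{m=1}^M a_m - \otimes_{m=1}^M \hat a_m}{\H_k} \le \sum_{j=1}^M\left(\prod_{m<j}\norm{\hat a_m}{\H_{k_m}}\right)\norm{a_j - \hat a_j}{\H_{k_j}}\left(\prod_{m>j}\norm{a_m}{\H_{k_m}}\right).
\end{align*}

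Substituting the two bounds from the first step together with $\norm{a_j - \hat a_j}{\H_{k_j}} = d_{k_j}$ upper-bounds the right-hand side by $\sum_{j=1}^M\left(\prod_{m<j}(a_{k_m}+d_{k_m})\right)d_{k_j}\left(\prod_{m>j}a_{k_m}\right)$. Finally I would recognize this as a scalar telescoping expansion: setting $p_m := a_{k_m}+d_{k_m}$ and $q_m := a_{k_m}$, one has $p_j - q_j = d_{k_j}$, and the identity $\prod_m p_m - \prod_m q_m = \sum_{j=1}^M\left(\prod_{m<j}p_m\right)(p_j-q_j)\left(\prod_{m>j}q_m\right)$ shows the bound equals $\prod_{m\in[M]}(a_{k_m}+d_{k_m}) - \prod_{m\in[M]}a_{k_m}$, which is exactly the claimed right-hand side.

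The only genuinely delicate point is the telescoping step at the level of tensors together with the factorization \eqref{eq:tensor:norm}: one must ensure that each summand is an elementary tensor $\otimes_{m=1}^M c_m$, so that \eqref{eq:tensor:norm} applies termwise, and that the triangle inequality is legitimately distributed across the $M$ summands of the telescope. Everything else — the boundedness of the population embeddings and the scalar product identity at the end — is routine.
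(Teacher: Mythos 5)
Your proof is correct and is essentially the same argument as the paper's: the paper proceeds by induction on $M$, splitting off the last tensor factor via one add-and-subtract step, which when unrolled is exactly your telescoping sum, and it uses the same two ingredients (the Bochner-integral bound $\norm{\mu_{k_m}(\P_m)}{\H_{k_m}}\le a_{k_m}$ and the triangle-inequality bound $\norm{\mu_{k_m}(\tilde\P_{m,n'})}{\H_{k_m}}\le a_{k_m}+d_{k_m}$, combined with \eqref{eq:tensor:norm} on elementary tensors). The only cosmetic difference is that your telescope places the hatted factors before index $j$ and the paper's recursion effectively places them after, which is immaterial since both sums collapse to $\prod_{m\in[M]}(a_{k_m}+d_{k_m})-\prod_{m\in[M]}a_{k_m}$.
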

  Our resulting Nyström $M$-HSIC performance guarantee is as follows.
  \begin{proposition}[Error bound for Nyström $M$-HSIC] \label{thm:error-nystrom-hsic}
   Let $X=(X_m)_{m=1}^M \in \mathcal X = \times_{m=1}^M \X_m$, $ X \sim\P{}\in \mathcal M_1^+(\X{})$,  $(\mathcal X_m)_{m\in [M]}$ locally compact, second-countable topological spaces, $k_m: \mathcal X_m \times \mathcal X_m \rightarrow \R$ bounded kernels, i.e., $\exists a_{k_m}\in (0,\infty)$ such that $\sup_{x_m\in \X_m}\sqrt{k_m(x_m,x_m)} \leq a_{k_m}$ for all $m\in [M]$, $k=\otimes_{m\in [M]}k_m$, $a_k = \prod_{m=1}^M a_{k_m}$,  $\phi_{k_m}(x_m)=k_m(\cdot,x_m)$ for all $x_m \in \X_m$, $\phi_k = \otimes_{m=1}^M \phi_{k_m}$, $C_k = \E_{}\left[\phi_{k}(X) \otimes \phi_{k}(X) \right]$, $C_{k_m} = \E_{}\left[\phi_{k_m}(X_m) \otimes \phi_{k_m}(X_m) \right]$, the number of Nyström points $n' \le n$,  $\hat\P_{n}$ defined according to \eqref{eq:sample-of-m-tuplets}.
  Then, for any $\delta \in \left(0, \frac{1}{M+1}\right)$  
  \begin{eqnarray*}
    \lefteqn{\left|\mathrm{HSIC}_k(\P{}) - \HSIC_{k,\text N}\left(\hat \P_{n}\right) \right| \leq \underbrace{\frac{c_{k,1}}{ \sqrt{n}}}_{t_{k,1}} + \underbrace{\frac{c_{k,2}}{n'}}_{t_{k,2}} + }\\
    &&+ \underbrace{\frac{c_{k,3}\sqrt{\log(n'/\delta)}}{n'}\sqrt{\mathcal N_{X}\left(\frac{12a_k^2\log(n'/\delta)}{n'}\right)}}_{t_{k,3}} +\\
    &&+\prod_{m\in[M]}\Bigg[a_{k_m} + \underbrace{\frac{c_{k_m,1}}{ \sqrt{n}}}_{t_{k_m,1}} + \underbrace{\frac{c_{k_m,2}}{n'}}_{t_{k_m,2}} + \\
    && + \underbrace{\frac{c_{k_m,3}\sqrt{\log(n'/\delta)}}{n'}\sqrt{\mathcal N_{X_m}\left(\frac{12a_{k_m}^2\log(n'/\delta)}{n'}\right)}}_{t_{k_m,3}}\Bigg]\\
    &&-\prod_{m\in[M]}a_{k_m}
  \end{eqnarray*}
 holds  with probability at least $1-(M+1)\delta$, provided that
  \begin{align*}
    n' \geq \max_{m\in[M]}\left(67,12a_k^2\opnorm{C_k}^{-1},12a_{k_m}^2\opnorm{C_{k_m}}^{-1}\right)\log\frac{n'}{\delta},
  \end{align*}
  where $c_{k,1}= 2a_k\sqrt{2\log(6/\delta)}$, $c_{k,2}=4\sqrt 3 a_k \log(12/\delta)$, $c_{k,3}= 12\sqrt{3\log(12/\delta)}a_k$, $c_{k_m,1} =  2a_{k_m}\sqrt{2\log(6/\delta)}$, $c_{k_m,2}=4\sqrt 3 a_{k_m} \log(12/\delta)$, $c_{k_m,3} =  12\sqrt{3\log(12/\delta)}a_{k_m}$ for $m \in [M]$.
\end{proposition}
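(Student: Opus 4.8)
The plan is to reduce the claim to two mean-embedding approximation errors and then control each by a concentration bound together with the tensor-product error propagation of Lemma~\ref{lemma:decomposition}. First I would record that, by \eqref{eq:def-hsic}, $\mathrm{HSIC}_k(\P{}) = \norm{C_X}{\H_k}$ with $C_X = \mu_k(\P{}) - \otimes_{m=1}^M \mu_{k_m}(\P_m)$, while expanding the squared norm $\norm{\mu_k(\tilde\P_{n'}) - \otimes_{m=1}^M \mu_{k_m}(\tilde\P_{m,n'})}{\H_k}^2$ via \eqref{eq:tensor:inner-product} and the reproducing property reproduces exactly the three terms of \eqref{eq:nyström-hsic}; hence $\HSIC_{k,\text N}(\hat\P_n) = \norm{\hat C_X}{\H_k}$ with $\hat C_X = \mu_k(\tilde\P_{n'}) - \otimes_{m=1}^M \mu_{k_m}(\tilde\P_{m,n'})$. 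The reverse triangle inequality for the $\H_k$-norm then gives
\begin{align*}
\left|\mathrm{HSIC}_k(\P{}) - \HSIC_{k,\text N}(\hat\P_n)\right| \le \norm{C_X - \hat C_X}{\H_k},
\end{align*}
and a further triangle inequality splits the right-hand side into a joint term $\norm{\mu_k(\P{}) - \mu_k(\tilde\P_{n'})}{\H_k}$ and a product-of-marginals term $\norm{\otimes_{m=1}^M \mu_{k_m}(\P_m) - \otimes_{m=1}^M \mu_{k_m}(\tilde\P_{m,n'})}{\H_k}$.

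For the joint term I would invoke the Nyström mean-embedding concentration bound of \citet{chatalic22nystrom}, instantiated with the triple $(\Y,\ell,\Q) = (\X,k,\P{})$ from \eqref{eq:triplets}: with feature-map bound $a_k = \prod_m a_{k_m}$, effective dimension $\mathcal N_X(\lambda)$ and regularization $\lambda = 12 a_k^2 \log(n'/\delta)/n'$, this yields $\norm{\mu_k(\P{}) - \mu_k(\tilde\P_{n'})}{\H_k} \le t_{k,1}+t_{k,2}+t_{k,3}$ with probability at least $1-\delta$, producing exactly the three leading terms. The same bound applied to each marginal triple $(\X_m,k_m,\P_m)$, now with effective dimension $\mathcal N_{X_m}$ and bound $a_{k_m}$, gives $d_{k_m} := \norm{\mu_{k_m}(\P_m) - \mu_{k_m}(\tilde\P_{m,n'})}{\H_{k_m}} \le t_{k_m,1}+t_{k_m,2}+t_{k_m,3}$, each with probability at least $1-\delta$. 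The admissibility conditions of that bound (of the form $n' \ge \max(67, 12a^2\opnorm{C}^{-1})\log(n'/\delta)$) are precisely what force the stated lower bound on $n'$, each of the joint and marginal instances contributing one argument to the maximum.

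For the product-of-marginals term I would feed these per-component errors into Lemma~\ref{lemma:decomposition}, which bounds it by $\prod_m(a_{k_m}+d_{k_m}) - \prod_m a_{k_m}$. Since every factor $a_{k_m}+d_{k_m}$ is positive and the map $(d_{k_m})_m \mapsto \prod_m(a_{k_m}+d_{k_m})$ is coordinatewise nondecreasing, I can substitute the upper bounds $d_{k_m} \le t_{k_m,1}+t_{k_m,2}+t_{k_m,3}$ directly to obtain the product expression in the statement. Collecting the $M+1$ concentration events—one joint plus $M$ marginal—and applying a union bound shows they hold simultaneously with probability at least $1-(M+1)\delta$, positive exactly because $\delta \in (0, 1/(M+1))$; on this event the summed bounds give the claimed inequality.

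The main obstacle is the concentration bound for a single Nyström mean embedding that simultaneously controls the sampling error and the subspace-projection error in terms of $\mathcal N_X(\lambda)$, yielding the $1/\sqrt n$, $1/n'$ and $\mathcal N_X$-weighted $1/n'$ terms with the exact constants $c_{k,i}$. This is the technical heart borrowed from \citet{chatalic22nystrom}; the remaining work—the two triangle inequalities, the monotone substitution into Lemma~\ref{lemma:decomposition}, and the union bound—is routine bookkeeping, the only subtlety being to track the $\log(6/\delta)$ versus $\log(12/\delta)$ factors arising from how that concentration result splits its own failure probability.
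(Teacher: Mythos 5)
Your proposal is correct and follows essentially the same route as the paper: the reverse triangle inequality reduces the problem to $\norm{C_X-\hat C_X}{\H_k}$, a further triangle inequality splits off the joint term (bounded by the Nystr\"om mean-embedding result of \citet{chatalic22nystrom} applied to the product kernel on the product space) and the marginal tensor-product term (bounded via Lemma~\ref{lemma:decomposition} with each $d_{k_m}$ controlled by the same concentration bound), followed by a union bound over the $M+1$ events. The one detail worth making explicit, which the paper isolates in its Lemma on the tensor-product kernel, is that applying the concentration bound to $(\X,k,\P{})$ requires verifying that $\X=\times_{m=1}^M\X_m$ is itself locally compact and second-countable and that $\sup_x\sqrt{k(x,x)}\le a_k$; both follow readily from the componentwise assumptions.
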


As a baseline, to interpret the result (see the second bullet point in Remark~\ref{remark:main-prop-remarks}), one could consider the V-statistic based HSIC estimator \eqref{eq:emp-hsic} for $M\ge 2$, which according to our following lemma has a convergence rate of $\O_{\text{P}}\left(\frac {1}{\sqrt n}\right)$.

\begin{lemma}[Deviation bound for V-statistic based HSIC estimator]
  \label{lemma:deviation}
  Let $\HSIC_{k}(\hat \P_n)$ be as in \eqref{eq:emp-hsic} on a metric space $\X=\times_{m=1}^M\X_m$, and $\HSIC_k\left(\P\right) > 0$. Then
  \begin{align*}
    \left|\HSIC_k\left(\P\right)-\HSIC_{k}\left(\hat \P_n\right)\right| = \O_{\text{P}}\left(\frac{1}{\sqrt n} \right).
  \end{align*}  
\end{lemma}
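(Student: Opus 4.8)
The plan is to recast both the population target and its V-statistic estimate as reproducing-kernel-Hilbert-space norms of the (empirical) centered cross-covariance operator, and then to control their difference by a single norm deviation. Concretely, write $\hat C_X := \mu_k(\hat\P_n) - \otimes_{m=1}^M \mu_{k_m}(\hat\P_{m,n})$ for the empirical analogue of $C_X$ in \eqref{eq:centered-cross-cov}, where $\mu_k(\hat\P_n) = \frac1n\sum_{i\in[n]} \otimes_{m=1}^M \phi_{k_m}(x_m^i)$ and $\mu_{k_m}(\hat\P_{m,n}) = \frac1n\sum_{i\in[n]}\phi_{k_m}(x_m^i)$ are the plug-in mean embeddings. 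The first step is to verify that \eqref{eq:emp-hsic} equals $\norm{\hat C_X}{\H_k}^2$: expanding $\norm{\hat C_X}{\H_k}^2$ into its three inner products and rewriting each via $k = \otimes_{m} k_m$ and the Gram matrices \eqref{eq:gram-matrix} reproduces exactly the three summands of \eqref{eq:emp-hsic}. Hence $\HSIC_k(\hat\P_n) = \norm{\hat C_X}{\H_k}$ and, by the reverse triangle inequality for the RKHS norm,
\[
\left|\HSIC_k(\P) - \HSIC_k(\hat\P_n)\right| = \left|\norm{C_X}{\H_k} - \norm{\hat C_X}{\H_k}\right| \le \norm{C_X - \hat C_X}{\H_k}.
\]

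It then remains to show $\norm{C_X - \hat C_X}{\H_k} = \O_{\text{P}}(n^{-1/2})$, which I would split along the two parts of \eqref{eq:centered-cross-cov}. For the joint term, I use the standard concentration of the empirical mean embedding: since every $k_m$ is bounded, so is $k$, and writing $\mu_k(\hat\P_n) - \mu_k(\P) = \frac1n\sum_{i\in[n]}\left(\phi_k(x^i) - \mu_k(\P)\right)$ as an average of i.i.d., centred, almost-surely bounded RKHS elements gives $\E\norm{\mu_k(\hat\P_n)-\mu_k(\P)}{\H_k}^2 \le \frac{c}{n}$; Markov's inequality then yields $\norm{\mu_k(\hat\P_n)-\mu_k(\P)}{\H_k} = \O_{\text{P}}(n^{-1/2})$. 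The same bound holds for each marginal, i.e.\ $d_{k_m} := \norm{\mu_{k_m}(\P_m) - \mu_{k_m}(\hat\P_{m,n})}{\H_{k_m}} = \O_{\text{P}}(n^{-1/2})$.

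For the product-of-marginals term I would invoke the error-propagation bound of Lemma~\ref{lemma:decomposition}, whose derivation is purely algebraic (the tensor-norm factorization \eqref{eq:tensor:norm} together with $\norm{\mu_{k_m}(\cdot)}{\H_{k_m}} \le a_{k_m}$) and therefore applies verbatim with the empirical marginals $\hat\P_{m,n}$ in place of the Nyström subsamples. This gives $\norm{\otimes_m \mu_{k_m}(\P_m) - \otimes_m \mu_{k_m}(\hat\P_{m,n})}{\H_k} \le \prod_m (a_{k_m} + d_{k_m}) - \prod_m a_{k_m}$. Expanding the right-hand side over subsets of $[M]$, every surviving monomial contains at least one factor $d_{k_m}$, so the leading contribution is $\sum_m d_{k_m}\prod_{m'\neq m} a_{k_{m'}} = \O_{\text{P}}(n^{-1/2})$ and the remaining terms are of smaller order; hence this part is also $\O_{\text{P}}(n^{-1/2})$. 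Combining the two bounds through the triangle inequality and substituting into the display above closes the argument.

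A few remarks on where the effort and the hypotheses sit. The genuinely load-bearing step is the first one, certifying that the V-statistic formula \eqref{eq:emp-hsic} is literally $\norm{\hat C_X}{\H_k}^2$, since this is what makes the clean reverse-triangle-inequality bound available and lets me avoid any square-root manipulation. The assumption $\HSIC_k(\P) > 0$ is not actually needed for this route; it becomes relevant only in the alternative strategy of first proving $|\HSIC_k^2(\P) - \HSIC_k^2(\hat\P_n)| = \O_{\text{P}}(n^{-1/2})$ and then passing to the unsquared quantity via $\HSIC_k(\P) - \HSIC_k(\hat\P_n) = \left(\HSIC_k^2(\P) - \HSIC_k^2(\hat\P_n)\right)/\left(\HSIC_k(\P) + \HSIC_k(\hat\P_n)\right)$, where positivity of $\HSIC_k(\P)$ keeps the denominator bounded away from zero. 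The only other point requiring minor care is confirming that $\norm{\mu_{k_m}(\P_m)}{\H_{k_m}}$ and $\norm{\mu_{k_m}(\hat\P_{m,n})}{\H_{k_m}}$ are both bounded by $a_{k_m}$, which is immediate from Jensen's inequality and the kernel bound.
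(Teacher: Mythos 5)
Your proof is correct, but it follows a genuinely different route from the paper's. The paper works with the \emph{squared} statistic: it decomposes $|\HSIC_k^2(\hat\P_n)-\HSIC_k^2(\P)|$ into a U-statistic deviation (controlled by Hoeffding's inequality for U-statistics, Theorem~\ref{thm:hoeffding}) plus a U-versus-V discrepancy (controlled by Lemma~\ref{lemma:conn-u-v} and Markov's inequality, contributing only $\O_{\text{P}}(n^{-1})$), and then passes to the unsquared quantity by factoring $\HSIC_k^2(\hat\P_n)-\HSIC_k^2(\P)$ and dividing by $\HSIC_k(\P)>0$ --- which is precisely where the positivity hypothesis enters. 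You instead identify \eqref{eq:emp-hsic} as $\norm{\hat C_X}{\H_k}^2$ (this identification is correct; it is the same expansion that underlies the proof of Lemma~\ref{thm:nystroem-hsic}), apply the reverse triangle inequality to work with the unsquared norms directly, and control $\norm{C_X-\hat C_X}{\H_k}$ by mean-embedding concentration together with Lemma~\ref{lemma:decomposition}, whose proof is indeed purely algebraic and applies verbatim with $\hat\P_{m,n}$ in place of the Nyström subsamples. In effect you reuse the architecture of the proof of Proposition~\ref{thm:error-nystrom-hsic} with empirical means substituted for Nyström means, which avoids the U-statistic machinery entirely and --- as you correctly observe --- dispenses with the assumption $\HSIC_k(\P)>0$, so your argument actually establishes a marginally stronger statement. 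The paper's route buys a self-contained exponential tail bound for the squared statistic (via Hoeffding) and makes explicit that the V--U gap is of the smaller order $n^{-1}$; your route buys uniformity with the rest of the paper's analysis and a weaker hypothesis. Both are valid; the only point I would ask you to spell out is the expectation computation $\E\norm{\mu_k(\hat\P_n)-\mu_k(\P)}{\H_k}^2=\tfrac1n\bigl(\E[k(X,X)]-\norm{\mu_k(\P)}{\H_k}^2\bigr)\le\tfrac{a_k^2}{n}$, which justifies your Markov step.
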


\begin{remark}~ \label{remark:main-prop-remarks}
  \begin{itemize}[labelindent=0em,leftmargin=0.85em,topsep=0cm,partopsep=0cm,parsep=0cm,itemsep=2mm]
  \item From the terms $t_{k,1}, t_{k,2}, t_{k_m,1}, t_{k_m,2}, m\in[M]$ it follows that for $n' < \sqrt n$ the respective second term dominates, thus increasing the error; for $n' > \sqrt n$ the respective first term dominates and the computational complexity increases. The effective dimension $\left(t_{k,3}, t_{k_m,3}\right)$ controls the trade off between the two terms and can be related \citep{chatalic22nystrom} to the decay of the eigenvalues of the respective covariance operator. A convergence rate of $n^{-1/2}$ for the sums $t_{k,1}+t_{k,2}+t_{k,3}$ and $t_{k_m,1}+t_{k_m,2}+t_{k_m,3}$ can be achieved if
    \begin{itemize}[labelindent=0em,leftmargin=1.2em,topsep=0cm,partopsep=0cm,parsep=0cm,itemsep=2mm]
    \item  $\max_{m\in[M]}\left(\mathcal N_X(\lambda),\mathcal N_{X_m}(\lambda)\right) \le c\lambda^{-\gamma}$ for some $c > 0$ and $\gamma \in (0,1]$ with $n' = n^{1/(2-\gamma)}\log(n/\delta)$, or
    \item  $\max_{m\in[M]}\left(\mathcal N_X(\lambda),\mathcal N_{X_m}(\lambda)\right) \le \log(1+c/\lambda)/\beta$ for some $c>0$, $\beta >0$, and  $n' = \sqrt n \log\left(\sqrt n \max\limits_{m\in[M]}\left(\frac 1 \delta, \frac{c}{6a_k^2}, \frac{c}{6a_{k_m}^2}\right)\right)$.
    \end{itemize}
    This rate of convergence propagates through the product.
   
  \item Lemma~\ref{lemma:deviation} establishes that the V-statistic based estimator of HSIC converges with rate $n^{-1/2}$. Recalling the last line of Table~\ref{tab:comparison}, setting $n' = o\left(n^{2/3}\right)$, the proposed estimator yields an asymptotic speedup over V-HSIC. Hence, setting $n' = \tilde \Theta\left(\sqrt n\right)$ allows to obtain the same rate of convergence while decreasing runtime. Assumption $\HSIC_k\left(\P\right) > 0$ in Lemma~\ref{lemma:deviation} protects one from attaining a convergence rate of $n^{-1}$ of $\HSIC^2_k\left(\hat \P_n\right)$.
  \end{itemize}  
\end{remark}

\section{Experiments}
\label{sec:experiments}

In this section, we demonstrate the efficiency of the proposed method (N-MHSIC) 
against the baselines NFSIC, RFF-HSIC, N-HSIC and the quadratic-time V-statistic based HSIC estimator (V-HSIC) in the context of independence testing. Hence, the null  hypothesis $H_0$ is that the joint distribution factorizes to the product of the marginals, the alternative $H_1$ is that this is not the case. The experiments study both synthetic (Section~\ref{subsec:toy-problems}) and real-world (Section~\ref{subsec:real-problems}) examples, in terms of power and runtime.\footnote{The code of our experiments is available at \url{https://github.com/FlopsKa/nystroem-mhsic}.} 

We use the Gaussian kernel 
\begin{align*}
k_m(\b x_m,\b x_m') = \exp\left(-\gamma_{k_m} \norm{\b x_m-\b x_m'}{2}^2\right)
\end{align*}
for all experiments, with $\gamma_{k_m}$ chosen according to the median heuristic. For a fair comparison of the test power, we approximate the null distribution of each test statistic by the permutation approach with $250$ samples.  We then perform a one-sided test with an acceptance region of $5\%$ ($\alpha=0.05$), which we repeat, for all power experiments, on $100$ independent draws of the data; the runtime results include these. We set each algorithm's parameters as recommended by the respective authors: For NFSIC, we set the number of test locations $J =5 $; the number of Fourier features (RFF-HSIC) and Nyström samples (N-HSIC) is set to $ \sqrt n$. The number of Nyström samples of N-MHSIC is indicated within the experiment description. The opaque area in the figures indicates the $0.95$-quantile obtained over $5$ runs. All experiments were performed on a PC with Ubuntu 20.04, 124GB RAM, and 32 cores with 2GHz each. 

\subsection{Synthetic Data}
\label{subsec:toy-problems}

We examine three toy problems in the following, illustrating runtime and statistical power.

\paragraph{Comparison of HSIC approximations under $H_0$.} \label{sec:toy-comparison}
First, for $M=2$ components, we compare our proposed method to the existing accelerated HSIC estimators (N-HSIC, RFF-HSIC) on independent data to assess convergence w.r.t.\ runtime.  Specifically, we set $X_1, X_2 \stackrel{\text{i.i.d.}}{\sim} \mathcal N(0,1)$. The theoretical value of HSIC is thus zero. 
Figure~\ref{fig:runtime_indep_data} shows the estimates for sample sizes from 100 to 1000; the number of Nyström samples for N-MHSIC is set to $n' = 2\sqrt n$. All approaches converge to zero, with N-MHSIC converging a bit slower than the exisiting HSIC approximations. However, we note that the gap is on the order of $10^{-3}$ so it is close to the theoretical value also for small sample sizes. The runtime scales as predicted by the complexity analysis, with the proposed approach running faster than both N-HSIC and RFF-HSIC starting from $n=500$ samples.

\begin{figure}
  \centering
  \includegraphics[width=\linewidth]{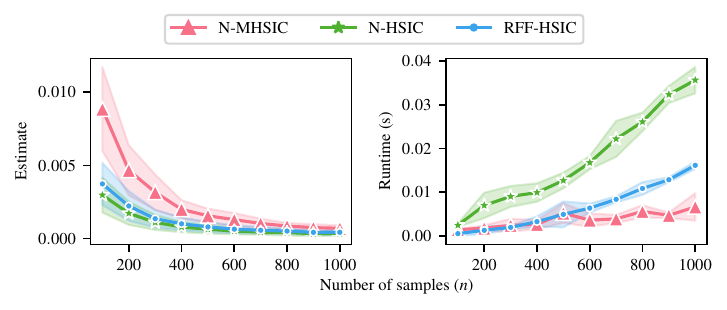}
  \caption{Estimation accuracy for $M=2$ components; the theoretical HSIC value is zero.}
  \label{fig:runtime_indep_data}
\end{figure}

\paragraph{Dependent Data ($H_1$ holds).} \label{sec:toy-strong-dependence}
To evaluate the statistical power on $M=2$ components, we set $X_1 \sim \mathcal N(0,1)$, $X_2=X_1 +\epsilon$, and $\epsilon \sim \mathcal N(0,1)$, with $n'$ set as before. Figure~\ref{fig:power} shows that N-MHSIC achieves a power of one for $n\approx 100$ and that it is slightly worse than the existing HSIC approximations for small sample sizes. V-HSIC has the highest power but also the highest runtime. Even though NFSIC has linear runtime complexity it is slower than all other statistics on small sample sizes.

\begin{figure}
  \centering
  \includegraphics[width=\linewidth]{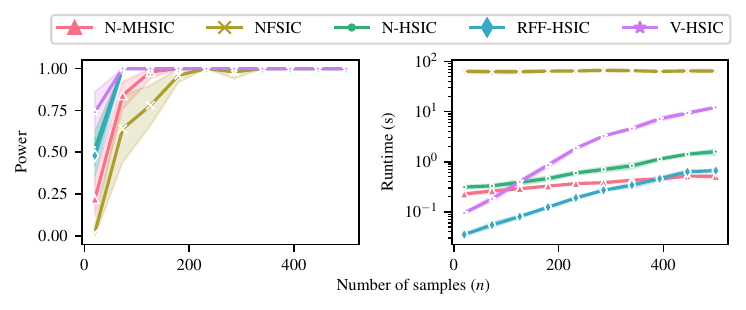}
  \caption{Power on dependent data. Runtime on log scale.}
  \label{fig:power}
\end{figure}

\paragraph{Causal Discovery.} \label{seq:toy-causal-discovery} The experiments until now considered $M=2$ components. However, N-MHSIC allows for handling $M\geq 2$ components and thus can estimate the directed acyclic graph (DAG) governing causality if one assumes an additive noise model.

Specifically, we sample from the structural equations
    $X_i = \sum_{j\in\mathrm{PA}_i}f^{i,j}\left(X_j\right)+\epsilon_i$  for $i \in [M]$,
of a randomly selected fully connected DAG with four nodes ($M=4$), of which there are $24$.
In the equation, $\mathrm{PA}_i$ denotes the parents of $i$ in the associated DAG, and the $\epsilon_i$ are normally distributed and jointly independent, with a variance sampled independently from the uniform distribution $\mathcal U\left(1,\sqrt 2\right)$. 

To now test whether a particular DAG fits the data,
\cite{pfister18kernel} propose to use generalized additive model regression to find the residuals when regressing each node onto all its parents and to reject the DAG if the residuals are not jointly independent. If these are independent, we accept the causal structure. In this application, one is only interested in the relative $p$-values when performing the procedure for all possible DAGs with the correct number of nodes.

V-HSIC has the best performance in \citep{pfister18kernel}, so we only compare against V-HSIC; it is also the only other approach which allows testing joint independence of more than two components. Figure~\ref{fig:sim-dag} shows how often N-MHSIC and V-HSIC identify the correct DAG in 100 samples. V-HSIC has higher power than N-MHSIC and more often identifies the correct DAG for small sample sizes. However, as the r.h.s.\ of Figure~\ref{fig:sim-dag} shows, the proposed algorithm runs even for $n' = 8\sqrt n$ and $n=1500$ twice as fast as V-HSIC while producing the same result quality. Due to their different runtime complexities, the gap in runtime widens further with increasing sample size.

\begin{figure}
  \centering
  \includegraphics[width=0.99\linewidth]{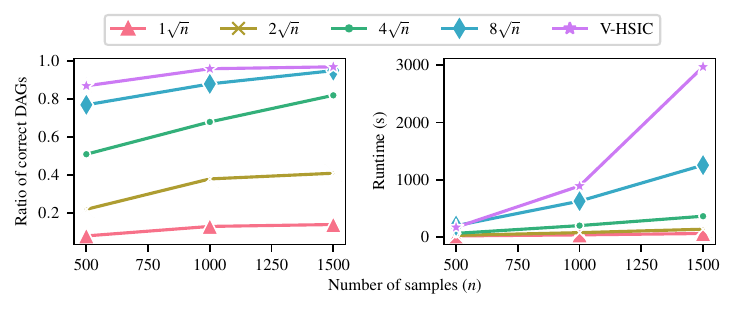}
  \caption{Ratio of correctly identified DAGs with $4$ nodes.}
  \label{fig:sim-dag}
\end{figure}

\subsection{Real-World Data} \label{subsec:real-problems}
This section is dedicated to benchmarks on real-world data.

\textbf{Million Song Data.} The Million Song Data \citep{bertin11million} contains approximately 500,000 songs. Each has 90 features ($X$) together with its year of release, which ranges from 1922 to 2011 ($Y$). The algorithms must detect the dependence between the features and the year of release. To approximate the power, we draw $100$ independent samples of the whole data set. Figure~\ref{fig:power-msd} shows the results, for level $\alpha=0.01$; the different ranges of $n$ highlight the asymptotic runtime gains. In contrast to a similar experiment of \cite{jitkrittum17adaptive2}, we use a permutation approach for all two-sample tests and increase the number of Nyström samples (random Fourier features) as a function of $n$, obtaining higher power throughout. The problem is sufficiently challenging, so that we set the number of Nyström samples to $8 \sqrt n$ for N-MHSIC. V-HSIC and NFSIC achieve maximum power from $n=500$. N-MHSIC features similar runtime and power as the existing HSIC approximations N-HSIC and RFF-HSIC but can handle more than two components. The runtime plot illustrates that the lower asymptotic complexity of N-MHSIC compared to V-HSIC also holds in practice.

\begin{figure}
  \centering
  \includegraphics[width=\linewidth]{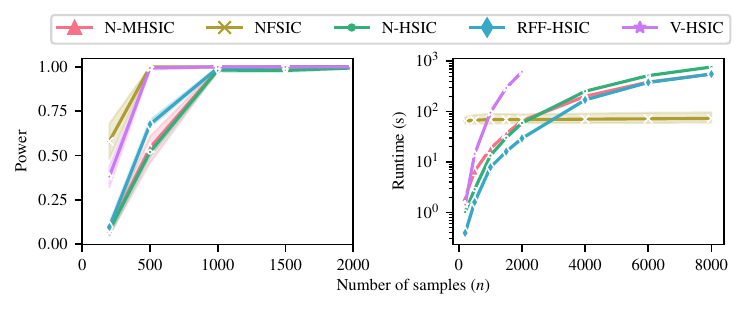}
  \caption{Test power vs.\ runtime on the Million Song Data.}
  \label{fig:power-msd}
\end{figure}

\paragraph{Weather Causal Discovery.} \label{sec:weather-causal} Here, we aim to infer the correct causality DAG from real-world data, namely the data set of \cite{mooij16distinguishing} which contains $349$ measurements consisting of altitude, temperature and sunshine. The goal is to infer the most plausible DAG with three nodes $(d=3)$ out of the $25$ possible DAGs ($3^3-2 = 25$; two graphs have a cycle). We assume the structural equations discussed before. Figure~\ref{fig:real-world-dag} shows the $p$-values with the estimated DAG (with index $25$) having the largest $p$-value. Again, we compare our results to V-HSIC  and find that both successfully identify the most plausible DAG \citep{pfister18kernel}.

These experiments demonstrate the efficiency of the proposed Nyström $M$-HSIC method.
\begin{figure}
    \centering
    \begin{subfigure}{0.59\linewidth}
         \centering
         \includegraphics[width=\linewidth]{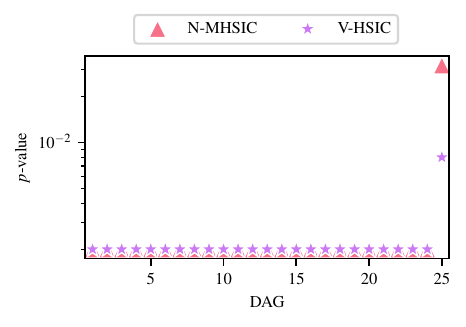}
        \end{subfigure}%
     ~
     \begin{subfigure}[c]{0.39\linewidth}
       \centering
       \vspace{-9.5em}
       \includegraphics[width=\linewidth]{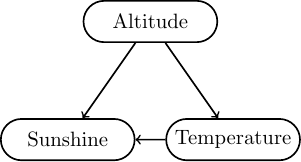}
    \end{subfigure}
    \caption{Testing for joint independence on the residuals of DAGs with three nodes (left) and the DAG with the largest $p$-value (right). The $p$-values agree on DAGs $1$ to $24$.}
    \label{fig:real-world-dag}
\end{figure}

\onecolumn
\appendix
\section{Appendix}
Section~\ref{sec:external-theorem} contains two external theorems and lemmas that we use. Section~\ref{sec:proofs} is about  our proofs.

\subsection{External Theorems and Lemmas}
\label{sec:external-theorem}
In this section two theorems and lemmas are recalled for self-completeness, Theorem~\ref{thm:rudi} is about bounding the error of Nyström mean embeddings \citep[Theorem 4.1]{chatalic22nystrom},  Theorem~\ref{thm:hoeffding} is a well-known result \citep[Section~5.6, Theorem~A]{serfling80approximation} for bounding the deviation of U-statistics. Lemma~\ref{lemma:conn-u-v} is about connection between U- and V-statistics. Lemma~\ref{lemma:markov} recalls Markov's inequality.

\begin{theorem}[Bound on mean embeddings]
  \label{thm:rudi} Let $\mathcal X$ be a locally compact second-countable topological space, $X$ a random variable supported on $\mathcal X$ with Borel probability measure $\P {}$, and let $\mathcal H_k$ be a RKHS on $\mathcal X$ with kernel $k : \X \times \X \to \R$, and feature map $\phi_k$. 
  Assume that there exists a constant $K \in (0,\infty)$ such that $\sup_{x \in \mathcal X}\sqrt{k(x,x)} \leq K$. Let $C_k=\E\left[\phi_k(X) \otimes \phi_k(X) \right]$. Furthermore, assume that the data points $\hat \P_{n} = \{x_1,\dots,x_n\}$ are drawn i.i.d.\ from the distribution $\P{}$ and that $n' \leq n$ subsamples $\tilde\P_{n'} = \{\tilde x_1, \dots, \tilde x_{n'}\}$ are drawn uniformly with replacement from the dataset $\hat\P_n$. Then for any $\delta \in (0,1)$ it holds that 
  \begin{align*}
  \norm{\mu_{k}\left(\P{}\right) - \mu_{k}\left(\tilde \P_{n'}\right) }{\H_k} \leq \frac{c_1}{ \sqrt{n}} + \frac{c_2}{n'} + \frac{c_3\sqrt{\log(n'/\delta)}}{n'}\sqrt{\mathcal N_{X}\left(\frac{12K^2\log(n'/\delta)}{n'}\right)},
  \end{align*}
  with probability at least $1-\delta$ provided that
  \begin{align*}
    n' \geq \max\left(67,12K^2\opnorm{C_k}^{-1}\right)\log\left(\frac{n'}{\delta}\right),
  \end{align*}
  where $c_1 = 2K\sqrt{2\log(6/\delta)}$, $c_2 = 4\sqrt 3 K \log(12/\delta)$, and $c_3 = 12\sqrt{3\log(12/\delta)}K$.
\end{theorem}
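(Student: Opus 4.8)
This is the external mean-embedding bound of \citet{chatalic22nystrom}, so the plan is to reproduce their bias--variance split in the RKHS rather than to invent a new route. The starting observation is that the Nyström embedding is an orthogonal projection: the minimum-norm solution of the least-squares problem \eqref{eq:optim-prob} that defines $\mu_k(\tilde\P_{n'})$ equals $P\,\mu_k(\hat\P_n)$, where $P$ is the orthogonal projector onto $\H_k^{\nys}=\Span\bigl(\phi_k(\tilde x^i):i\in[n']\bigr)$. Writing $\hat\mu:=\mu_k(\hat\P_n)$ and adding and subtracting $\hat\mu$ gives
\begin{align*}
\mu_k(\P{})-\mu_k(\tilde\P_{n'}) = \bigl(\mu_k(\P{})-\hat\mu\bigr) + (I-P)\hat\mu ,
\end{align*}
so that, by the triangle inequality, $\norm{\mu_k(\P{})-\mu_k(\tilde\P_{n'})}{\H_k}$ is at most the \emph{statistical error} $\norm{\mu_k(\P{})-\hat\mu}{\H_k}$ plus the \emph{Nyström projection error} $\norm{(I-P)\hat\mu}{\H_k}$. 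I would bound the two pieces on separate high-probability events and conclude with a union bound; the split of the failure budget across these events is what produces the differing $\log(6/\delta)$ and $\log(12/\delta)$ inside $c_1$ and $c_2,c_3$.

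For the statistical error I would apply a Hoeffding/Pinelis-type concentration inequality for bounded Hilbert-space-valued variables to $\hat\mu=\tfrac1n\sum_i\phi_k(x_i)$. Since $\norm{\phi_k(x)}{\H_k}=\sqrt{k(x,x)}\le K$, this yields $\norm{\mu_k(\P{})-\hat\mu}{\H_k}\le 2K\sqrt{2\log(6/\delta)}/\sqrt n = c_1/\sqrt n$ with high probability. This term is the only source of the $1/\sqrt n$ dependence.

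The projection error is the crux. Introducing the population covariance $C_k$ and its regularisation $C_{k,\lambda}:=C_k+\lambda I$ with $\lambda:=12K^2\log(n'/\delta)/n'$, I would control $(I-P)\hat\mu$ through two factors: (i) a \emph{subspace-quality} term $\opnorm{(I-P)C_{k,\lambda}^{1/2}}$, measuring how well the uniformly sampled landmarks span the leading directions of $C_k$, and (ii) a \emph{reconstruction} term in the whitened geometry $h\mapsto\norm{C_{k,\lambda}^{-1/2}h}{\H_k}$, which is where the effective dimension $\mathcal N_X(\lambda)=\trace\bigl(C_kC_{k,\lambda}^{-1}\bigr)$ enters. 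For (i) I would use the operator-Bernstein bounds for uniform landmark sampling, giving $\opnorm{(I-P)C_{k,\lambda}^{1/2}}^2\le 3\lambda$ on a high-probability event as soon as $n'$ exceeds a constant multiple of $\mathcal N_\infty(\lambda)\log(n'/\delta)$, where $\mathcal N_\infty(\lambda)\le K^2/\lambda$. The choice of $\lambda$ above makes $K^2/\lambda=n'/(12\log(n'/\delta))$, which is exactly the origin both of the threshold $n'\ge\max\bigl(67,12K^2\opnorm{C_k}^{-1}\bigr)\log(n'/\delta)$ and of the argument $12K^2\log(n'/\delta)/n'$ appearing inside $\mathcal N_X$. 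For (ii) I would center $\hat\mu$ and apply a Bernstein inequality in the whitened norm, whose variance proxy is governed by $\mathcal N_X(\lambda)$ and whose boundedness proxy is $K/\sqrt\lambda$; this produces the factor $\sqrt{\mathcal N_X(\lambda)}$ together with the lower-order $c_2/n'$ term. Combining (i) and (ii) gives $\norm{(I-P)\hat\mu}{\H_k}\le c_2/n' + c_3\sqrt{\log(n'/\delta)}\,\sqrt{\mathcal N_X(\lambda)}/n'$.

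The main obstacle is factor (i): establishing $\opnorm{(I-P)C_{k,\lambda}^{1/2}}\lesssim\sqrt\lambda$ for the \emph{random} projector $P$ requires transferring the landmark-based empirical covariance to the population operator $C_k$ and controlling their discrepancy uniformly via operator concentration. It is also here that the \emph{sharp} rate must be recovered: a bare triangle inequality $\norm{(I-P)\hat\mu}{\H_k}\le\opnorm{(I-P)C_{k,\lambda}^{1/2}}\,\norm{C_{k,\lambda}^{-1/2}\hat\mu}{\H_k}$ together with the crude Jensen bound $\norm{C_{k,\lambda}^{-1/2}\hat\mu}{\H_k}\lesssim\sqrt{\mathcal N_X(\lambda)}$ only delivers the weaker $\sqrt{\mathcal N_X(\lambda)/n'}$, whereas the claimed term scales like $\sqrt{\mathcal N_X(\lambda)}/n'$; closing this $\sqrt{n'}$ gap is the delicate part of the argument of \citet{chatalic22nystrom}. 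Tracking the constants $c_1,c_2,c_3$ through the separate events and assembling them by a union bound then completes the proof.
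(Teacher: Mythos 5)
This statement is not proved in the paper at all: it is recalled verbatim in the appendix under ``External Theorems and Lemmas'' as Theorem~4.1 of \citet{chatalic22nystrom}, and the paper's only justification is that citation. So there is no internal proof to compare against; the relevant comparison is with the cited source, and your sketch does follow its architecture faithfully — the identification of $\mu_k(\tilde\P_{n'})$ as $P\mu_k(\hat\P_n)$ for the orthogonal projector $P$ onto $\Span\left(\phi_k(\tilde x^i):i\in[n']\right)$, the split into a sampling term bounded by Hoeffding/Pinelis in $\H_k$ and a projection term, the control of $\opnorm{(I-P)C_{k,\lambda}^{1/2}}$ by $\sqrt{3\lambda}$ via operator Bernstein with $\lambda=12a_k^2\log(n'/\delta)/n'$ (which is where the admissibility condition on $n'$ and the argument of $\mathcal N_X$ come from), and the union bound producing the $\log(6/\delta)$ versus $\log(12/\delta)$ constants. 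The one place your sketch stops short is exactly the step you flag as delicate: the mechanism that upgrades $\sqrt{\mathcal N_X(\lambda)/n'}$ to $\sqrt{\mathcal N_X(\lambda)}/n'$ is that $(I-P)$ annihilates the \emph{unweighted} landmark average $\bar\mu_{n'}=\frac{1}{n'}\sum_{i\in[n']}\phi_k(\tilde x^i)$ (it lies in the range of $P$), so one may write $(I-P)\hat\mu=(I-P)(\hat\mu-\bar\mu_{n'})$ and apply Bernstein in the whitened norm to the difference of the two empirical means — an average over the $n'$ landmarks with conditional mean $\hat\mu$, variance proxy $\mathcal N_X(\lambda)$ and bound $a_k/\sqrt\lambda$ — which yields $\norm{C_{k,\lambda}^{-1/2}(\hat\mu-\bar\mu_{n'})}{\H_k}\lesssim\sqrt{\mathcal N_X(\lambda)/n'}+1/(\sqrt{\lambda}\,n')$ and, after multiplying by $\sqrt{3\lambda}$, the claimed $c_2/n'$ and $c_3$ terms. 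Your phrase ``center $\hat\mu$'' gestures at this but does not say what the centering point is or why it is killed by $I-P$; making that explicit is what closes the $\sqrt{n'}$ gap you correctly identify.
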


Recall that a U-statistic is the average of a (symmetric) core function $h=h(x_1,\dots,x_m)$ over the observations $X_1,\dots,X_n \sim \P$ ($n\ge m$)  with form
\begin{align}
\label{eq:def-u-stat}
  U_n = U(X_1,\dots,X_m) = \frac{1}{\binom{n}{m}}\sum_ch(X_{i_1},\dots,X_{i_m}),
\end{align}
where $c$ is the set of the $\binom n m $ combinations of $m$ distinct elements $\{i_1,\dots,i_m\}$ from $\{1,\dots,n\}$. $U_n$ is an unbiased estimator of $\theta = \theta(\P) = \E_\P[h(X_1,\dots,X_m)]$. 

\begin{theorem}[Hoeffding's inequality for U-statistics]
  \label{thm:hoeffding}Let $h = h(x_1,\dots,x_m)$ be a core function for $\theta = \theta(\P)=\E_{\P}\left[h(X_1,\dots,X_m)\right]$ with $a \leq h(x_1,\dots,x_m) \leq b$.  Then, for any $u> 0$ and $n\geq m$,
  \begin{align*}
    \P(U_n - \theta \geq u) \leq \exp\left(-\frac{2nu^2}{m(b-a)^2}\right).
  \end{align*}
\end{theorem}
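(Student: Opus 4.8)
The plan is to prove the bound via Hoeffding's classical averaging decomposition of the U-statistic, followed by a Chernoff argument. The central first step is to rewrite $U_n$ as a uniform average, over all permutations $\pi$ of $\{1,\dots,n\}$, of averages of \emph{independent} blocks. Writing $k=\lfloor n/m\rfloor$ and defining for each $\pi$ the block average
\begin{align*}
  W_\pi = \frac{1}{k}\sum_{j=1}^{k} h\left(X_{\pi((j-1)m+1)},\dots,X_{\pi(jm)}\right),
\end{align*}
one has the identity $U_n = \frac{1}{n!}\sum_\pi W_\pi$. This is verified by a counting argument: for each fixed $j$, as $\pi$ ranges over all permutations the ordered index tuple hits every ordered $m$-tuple of distinct indices equally often, and the symmetry of $h$ collapses these to the $\binom{n}{m}$ unordered subsets, returning exactly $U_n$. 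The structural payoff is that for fixed $\pi$ the $k$ summands of $W_\pi$ depend on pairwise disjoint blocks of indices, so under $X_1,\dots,X_n\stackrel{\text{i.i.d.}}{\sim}\P$ they are i.i.d., bounded in $[a,b]$, with mean $\theta$.

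Next I would bound the moment generating function of $U_n-\theta$. For any $s>0$, convexity of $t\mapsto e^{st}$ and Jensen's inequality move the permutation average outside the exponential,
\begin{align*}
  \E\left[e^{s(U_n-\theta)}\right] \le \frac{1}{n!}\sum_\pi \E\left[e^{s(W_\pi-\theta)}\right],
\end{align*}
and exchangeability makes every summand on the right equal to $\E[e^{s(W-\theta)}]$ for a single block average $W=\frac1k\sum_{j=1}^k Y_j$ with $Y_j\in[a,b]$ i.i.d.\ and $\E Y_j=\theta$. Independence of the $Y_j$ factorizes the expectation, and Hoeffding's lemma $\E[e^{t(Y-\E Y)}]\le e^{t^2(b-a)^2/8}$ applied to each of the $k$ factors (with $t=s/k$) yields
\begin{align*}
  \E\left[e^{s(U_n-\theta)}\right] \le \exp\left(\frac{s^2(b-a)^2}{8k}\right).
\end{align*}

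To conclude, I would apply Markov's inequality (Lemma~\ref{lemma:markov}) to $e^{s(U_n-\theta)}$ to obtain the Chernoff bound $\P(U_n-\theta\ge u)\le \exp\left(-su + \frac{s^2(b-a)^2}{8k}\right)$, and minimize over $s>0$. The optimizer $s=4ku/(b-a)^2$ gives exponent $-2ku^2/(b-a)^2$, i.e.\ the stated bound with the number of independent blocks $k=\lfloor n/m\rfloor$ (abbreviated $n/m$ in the statement). The main obstacle — and the only genuinely nonroutine part — is the first step: establishing the averaging identity $U_n=\frac1{n!}\sum_\pi W_\pi$ through the combinatorial counting that exploits the symmetry of $h$, and then recognizing that the disjoint-block structure makes each $W_\pi$ an average of independent bounded variables. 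Once that reduction to the i.i.d.\ case is in place, the remaining moment-generating-function estimate and Chernoff optimization are entirely standard.
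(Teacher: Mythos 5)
Your proof is correct, but note that the paper itself does not prove this statement: it is recalled as an external result (Theorem~\ref{thm:hoeffding}) and attributed to \citet[Section~5.6, Theorem~A]{serfling80approximation}, so there is no in-paper argument to compare against. What you have reconstructed is exactly the classical proof underlying that citation — Hoeffding's 1963 device of writing $U_n$ as the average over all $n!$ permutations of block sums $W_\pi$ built from $k=\lfloor n/m\rfloor$ disjoint (hence independent) blocks, pushing the permutation average outside the exponential via Jensen, factorizing the moment generating function, applying Hoeffding's lemma to each factor, and optimizing the Chernoff bound. All steps check out: the counting argument for $U_n=\frac{1}{n!}\sum_\pi W_\pi$ uses the symmetry of $h$ correctly, the blocks are genuinely i.i.d.\ for fixed $\pi$, and the optimization $s=4ku/(b-a)^2$ yields exponent $-2ku^2/(b-a)^2$. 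The one caveat, which you flag yourself, is that this argument delivers the bound with $k=\lfloor n/m\rfloor$ rather than $n/m$; when $m\nmid n$ the paper's printed exponent $-2nu^2/\left(m(b-a)^2\right)$ is strictly stronger than what the block decomposition gives. This discrepancy lies in the paper's transcription rather than in your argument — Serfling's statement indeed carries the floor — and it is immaterial for the paper's downstream use (the $\O_{\text{P}}(n^{-1/2})$ rate in Lemma~\ref{lemma:deviation-u-stat} is unaffected by the constant-factor difference between $\lfloor n/m\rfloor$ and $n/m$).
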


Similar to \eqref{eq:def-u-stat} one can consider an alternative (slightly biased) estimator of $\theta$, which is called V-statistic:
\begin{align}
\label{eq:def-v-stat}
  V_n = V(X_1,\dots,X_m) = \frac{1}{{n}^{m}}\sum_{(i_1,\dots,i_m) \in T_m(n)} h(X_{i_1},\dots,X_{i_m}),
\end{align}
where $T_m(n)$ is the $m$-fold Cartesian product of the set $[n]$.

There is a close relation between U- and V-statistics, as it is made explicit by the following lemma  \citep[Lemma,~Section~5.7.3]{serfling80approximation}.

\begin{lemma}[Connection between U- and V-statistics]
\label{lemma:conn-u-v} Let $\P$ be a probability measure on a metric space $\X$. Let $(X_i)_{i\in[n]}\stackrel{\text{i.i.d.}}{\sim} \P$. Let $m$ denote any element of $[n]$. Let $h$ be a core function satisfying 
    $\E\left[|h(X_1,\dots,X_m)|^r\right] < \infty$ with some $r \in \mathbb Z_+$.  Let $U_n$ and $V_n$ denote the U and V-statistic associated to $h$ as defined in \eqref{eq:def-u-stat} and \eqref{eq:def-v-stat}, respectively.
Then it holds that
\begin{align*}
    \E\left[\left|U_n-V_n\right|^r\right] = \O\left(n^{-r}\right).
\end{align*}
\end{lemma}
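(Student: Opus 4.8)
The plan is to split the index set $[n]^m$ underlying the V-statistic \eqref{eq:def-v-stat} into the ordered tuples with pairwise distinct entries — which reproduce the U-statistic \eqref{eq:def-u-stat} — and the remaining ``diagonal'' tuples, in which at least two indices coincide, and then to show that the diagonal part is negligible in $L^r$. Writing $(n)_m := n(n-1)\cdots(n-m+1)$ for the number of ordered distinct $m$-tuples and using the symmetry of the core $h$, the U-statistic equals $U_n = \frac{1}{(n)_m}\sum h(X_{i_1},\dots,X_{i_m})$ with the sum over ordered distinct tuples, so the V-statistic decomposes as
\begin{align*}
V_n = \frac{1}{n^m}\sum_{(i_1,\dots,i_m)\in[n]^m} h(X_{i_1},\dots,X_{i_m}) = \frac{(n)_m}{n^m}\,U_n + \frac{1}{n^m}D_n,
\end{align*}
where $D_n := \sum h(X_{i_1},\dots,X_{i_m})$ is summed over the tuples in $[n]^m$ that are \emph{not} pairwise distinct. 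Consequently
\begin{align*}
V_n - U_n = \left(\frac{(n)_m}{n^m}-1\right)U_n + \frac{1}{n^m}D_n .
\end{align*}

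First I would control the first summand. The deterministic prefactor satisfies $\frac{(n)_m}{n^m}-1 = \prod_{j=0}^{m-1}\left(1-\frac{j}{n}\right)-1 = \O\left(n^{-1}\right)$, while Minkowski's inequality applied to the average defining $U_n$ gives $\left\|U_n\right\|_r \le \left(\E\left[|h(X_1,\dots,X_m)|^r\right]\right)^{1/r} < \infty$ by the hypothesis. Hence this summand has $L^r$-norm $\O\left(n^{-1}\right)$.

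Next I would control the diagonal summand. The number of non-distinct tuples is $n^m-(n)_m = \O\left(n^{m-1}\right)$, and Minkowski's inequality bounds $\left\|D_n\right\|_r$ by the sum over these tuples of $\left\|h(X_{i_1},\dots,X_{i_m})\right\|_r$. Bounding each such term is the main obstacle: on a non-distinct tuple $h$ is evaluated at fewer than $m$ independent copies of $X$, so finiteness of its $r$-th moment is a genuinely different integrability statement than the assumed $\E\left[|h(X_1,\dots,X_m)|^r\right]<\infty$. I would settle this either by appealing to the precise moment conditions under which the cited result is stated, or — as suffices for every invocation of the lemma in this paper — by noting that the HSIC core function assembled from the bounded kernels $(k_m)_{m=1}^M$ is itself bounded, so $\left\|h(X_{i_1},\dots,X_{i_m})\right\|_r \le B_r$ uniformly over all index patterns. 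Granting this uniform bound, $\left\|D_n\right\|_r \le \left(n^m-(n)_m\right)B_r = \O\left(n^{m-1}\right)$, and therefore the diagonal summand $\frac{1}{n^m}D_n$ has $L^r$-norm $\O\left(n^{-1}\right)$.

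Finally, combining the two estimates via the triangle inequality would give $\left\|V_n-U_n\right\|_r = \O\left(n^{-1}\right)$, i.e.\ $\E\left[|U_n-V_n|^r\right] = \O\left(n^{-r}\right)$. I expect the only delicate point to be the diagonal integrability flagged above; the combinatorial count $n^m-(n)_m = \O\left(n^{m-1}\right)$ and the two Minkowski estimates are routine.
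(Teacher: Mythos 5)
The paper itself does not prove this lemma: it is recalled in the appendix purely as an external result, attributed to \citet[Lemma, Section~5.7.3]{serfling80approximation}, so there is no in-paper argument to compare against. Your proof is correct and is, in substance, Serfling's own argument: split $[n]^m$ into the $(n)_m=n(n-1)\cdots(n-m+1)$ ordered distinct tuples, which reassemble $U_n$, and the $n^m-(n)_m=\O\left(n^{m-1}\right)$ diagonal tuples; apply Minkowski's inequality (valid since $r\ge 1$) to both pieces; and use $(n)_m/n^m-1=\O\left(n^{-1}\right)$ to get $\left\|V_n-U_n\right\|_r=\O\left(n^{-1}\right)$ and hence the claimed $\O\left(n^{-r}\right)$ bound on the $r$-th moment. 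You are also right to flag the one genuine subtlety: the hypothesis as transcribed in the paper, $\E\left[|h(X_1,\dots,X_m)|^r\right]<\infty$ with $m$ i.i.d.\ arguments, does not by itself control $\left\|h(X_{i_1},\dots,X_{i_m})\right\|_r$ on tuples with repeated indices, and Serfling's original statement indeed assumes finiteness of these diagonal moments for every index pattern. Your resolution --- the HSIC core assembled from bounded kernels is itself bounded, so all such norms are uniformly bounded --- is exactly what makes the lemma applicable where the paper invokes it (in the proof of Lemma~\ref{lemma:deviation} with $r=1$). In short, your argument both supplies a proof the paper omits and repairs a small imprecision in its statement of the hypothesis.
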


\begin{lemma}[Markov inequality] \label{lemma:markov}
For a real-valued random variable $X$ with probability distribution $\P$ and $a > 0$, it holds that
\begin{align*}
    \P\left(|X| \ge a \right) \le \frac{\E\left(|X|\right)}{a}.
\end{align*}
\end{lemma}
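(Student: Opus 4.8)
The plan is to prove Markov's inequality by the standard indicator (truncation) argument, which reduces the claim to monotonicity and linearity of the expectation. The crucial observation lives on the underlying probability space: for \emph{every} outcome, the scaled indicator of the event $\{|X| \ge a\}$ is dominated pointwise by the random variable $|X|$ itself.

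First I would introduce the indicator random variable $\mathbf{1}_{\{|X| \ge a\}}$, equal to $1$ on the event $\{|X| \ge a\}$ and $0$ on its complement. Then I would establish the pointwise bound $a\,\mathbf{1}_{\{|X| \ge a\}} \le |X|$ by checking two cases: on $\{|X| \ge a\}$ the left-hand side equals $a$ while the right-hand side is at least $a$, and on the complementary event the left-hand side is $0$ while the right-hand side is nonnegative. In both cases the inequality holds, so it holds everywhere.

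Next I would apply $\E$ to both sides of this pointwise inequality. By monotonicity of the expectation (the integral preserves order) together with linearity (pulling the constant $a$ out), one obtains $a\,\E\!\left[\mathbf{1}_{\{|X| \ge a\}}\right] \le \E(|X|)$. Recognizing that $\E\!\left[\mathbf{1}_{\{|X| \ge a\}}\right] = \P(|X| \ge a)$ and dividing through by the strictly positive constant $a$ yields exactly the asserted bound $\P(|X| \ge a) \le \E(|X|)/a$.

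There is essentially no obstacle here: the only points deserving a moment's care are verifying the pointwise domination precisely on the event of interest (so the estimate is not vacuous) and invoking monotonicity of the Lebesgue integral, both of which are immediate. I would further remark that if $\E(|X|) = +\infty$ the inequality is trivially true, so beyond measurability of $X$ no integrability hypothesis is required.
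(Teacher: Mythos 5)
Your proof is correct and complete: the pointwise domination $a\,\mathbf{1}_{\{|X| \ge a\}} \le |X|$, monotonicity and linearity of the expectation, and the identity $\E\left[\mathbf{1}_{\{|X| \ge a\}}\right] = \P\left(|X| \ge a\right)$ yield exactly the stated bound, and your observation that the inequality is trivially true when $\E\left(|X|\right) = +\infty$ is a worthwhile remark. Note that the paper itself states this lemma \emph{without proof}, recalling it in the appendix only as a standard external result for self-completeness, so there is no in-paper argument to compare against; what you have written is the standard textbook indicator argument and would serve as a valid proof.
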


\subsection{Proofs} \label{sec:proofs}
This section is dedicated to proofs. Lemma~\ref{thm:nystroem-hsic} is derived in Section~\ref{sec:mult-empir-nystr}. Proposition~\ref{thm:error-nystrom-hsic} is proved in Section~\ref{sec:error-nystrom-hsic} relying on two lemmas shown in Section~\ref{sec:technical-lemmas}. Lemma~\ref{lemma:deviation} is proved in Section~\ref{sec:proof-lemma-refl}, with an auxiliary result in Section~\ref{sec:u-stat-deviation}.

\subsubsection{Proof of Lemma \ref{thm:nystroem-hsic}}
\label{sec:mult-empir-nystr}
Let $\kmeP = \sum_{i=1}^{n'} \alpha_k^i \otimes_{m=1}^M \phi_{k_m}(\tilde x_m^i) $, and let $ \kmePm = \sum_{i=1}^{n'}\alpha_{k_m}^i\phi_{k_m}(\tilde x_m^i) $ for $m\in[M]$. We write
\begin{align*}
  \HSIC_{k,\text N}^2\left(\hat\P_{n}\right) &=\norm{ \kmeP- \otimes_{m=1}^M \kmePm}{\H_k}^2\\ 
                                               &= \underbrace{\norm{\kmeP}{\H_k}^2}_{=:A} -
                                                 2\cdot \underbrace{\left \langle \kmeP , \otimes_{m=1}^M\kmePm \right \rangle_{\H_k}}_{=: C} +
                                                 \underbrace{\norm{\otimes_{m=1}^M\kmePm}{\H_k}^2}_{=: B},
\end{align*}
and continue term-by-term. Using the definition of the tensor product, we have for term $A$ that
\begin{align*}
  A &= \left\langle \kmeP, \kmeP \right\rangle_{\H_k} 
    =  \sum_{i=1}^{n'}\sum_{j=1}^{n'} \alpha_k^i \alpha_k^j \left\langle \otimes_{m=1}^M \phi_{k_m}(\tilde x_m^i), \otimes_{m=1}^M \phi_{k_m}(\tilde x_m^j)\right\rangle_{\H_k} \\
    &=   \sum_{i=1}^{n'}\sum_{j=1}^{n'} \alpha_k^i \alpha_k^j \prod_{m=1}^Mk_m(\tilde x_m^i,\tilde x_m^j) 
    = \bm \alpha_k\tran\left( \hadamard_{m=1}^M \mathbf{K}_{k_m,n'n'}\right) \bm \alpha_k.
\end{align*}
Similarly, we obtain for term $B$ that
\begin{align*}
  B &= \left\langle \otimes_{m=1}^M\kmePm, \otimes_{m=1}^M \kmePm \right\rangle_{\H_k} \\
    &=  \left\langle \otimes_{m=1}^M \sum_{i^{(m)}=1}^{n'}\alpha_{k_m}^{i^{(m)}}\phi_{k_m}\left(\tilde x_m^{i^{(m)}}\right), \otimes_{m=1}^M \sum_{j^{(m)}=1}^{n'}\alpha_{k_m}^{j^{(m)}}\phi_{k_m}\left(\tilde x_m^{j^{(m)}}\right) \right\rangle_{\H_k} \\
    &\stackrel{(*)}{=} \prod_{m=1}^M\sum_{i^{(m)}=1}^{n'}\sum_{j^{(m)}=1}^{n'} \alpha_{k_m}^{i^{(m)}}\alpha_{k_m}^{j^{(m)}} k_m\left(\tilde x_m^{i^{(m)}},\tilde x_m^{j^{(m)}}\right) 
  =  \prod_{m=1}^M \bm \alpha_{k_m}\tran \mathbf{K}_{k_m,n'n'} \bm\alpha_{k_m},
\end{align*}
where in $(*)$ we used \eqref{eq:tensor:inner-product}, the linearity of the inner product, and the reproducing property.

Last, we express term $C$ as
\begin{align*}
 C &= \left\langle \sum_{i=1}^{n'} \alpha_k^i \otimes_{m=1}^M \phi_{k_m}\left(\tilde x_m^i\right) ,\otimes_{m=1}^M \sum_{j^{(m)}=1}^{n'}\alpha_{k_m}^{j^{(m)}}\phi_{k_m}\left(\tilde x_m^{j^{(m)}}\right) \right \rangle_{\H_k} \\
&\stackrel{(a)}{=} \sum_{i=1}^{n'} \alpha_k^i \left\langle  \otimes_{m=1}^M \phi_{k_m}\left(\tilde x_m^i\right) ,\otimes_{m=1}^M \sum_{j^{(m)}=1}^{n'}\alpha_{k_m}^{j^{(m)}}\phi_{k_m}\left(\tilde x_m^{j^{(m)}}\right) \right \rangle_{\H_k} \\
&\stackrel{(b)}{=} \sum_{i=1}^{n'}\alpha_k^i \prod_{m\in[M]}  \left\langle   \phi_{k_m}\left(\tilde x_m^i\right) , \sum_{j^{(m)}=1}^{n'}\alpha_{k_m}^{j^{(m)}}\phi_{k_m}\left(\tilde x_m^{j^{(m)}}\right) \right \rangle_{\H_k} \\
&\stackrel{(c)}{=} \sum_{i=1}^{n'}  \alpha_k^i\prod_{m\in[M]} \sum_{j^{(m)}=1}^{n'}\alpha_{k_m}^{j^{(m)}} \left\langle   \phi_{k_m}\left(\tilde x_m^i\right) , \phi_{k_m}\left(\tilde x_m^{j^{(m)}}\right) \right \rangle_{\H_k} \\
&\stackrel{(d)}{=} \sum_{i=1}^{n'} \alpha_k^i\prod_{m\in[M]} \underbrace{\sum_{j^{(m)}=1}^{n'}\alpha_{k_m}^{j^{(m)}}  k_m\left(\tilde x_m^i,\tilde x_m^{j^{(m)}}\right )}_{\left(\b K_{k_m,n'n'}\right)_i\bm\alpha_{k_m}} 
   = \bm\alpha_k\tran \left ( \hadamard_{m=1}^M \mathbf{K}_{k_m,n'n'}\bm\alpha_{k_m} \right),
\end{align*}
where (a) follows from the linearity of the inner product, (b) holds by  \eqref{eq:tensor:inner-product}, (c) is implied by the linearity of the inner product, (d) is valid by the reproducing property, and we refer to the $i$-th row of $\b K_{k_m,n'n'}$ as $\left(\b K_{k_m,n'n'}\right)_i$.

Substituting terms $A, B$, and $C$ concludes the proof.

\subsubsection{Two Lemmas to the Proof of Proposition~\ref{thm:error-nystrom-hsic}}
\label{sec:technical-lemmas}

Our main result relies on two lemmas.

\begin{lemma}[Error bound for Nyström mean embedding of tensor product kernel]
  \label{lemma:nystrom-cross-cov}    
  Let  $X=(X_m)_{m=1}^M \in \mathcal X = \times_{m=1}^M \X_m$, $X\sim\P{}\in \mathcal M_1^+(\X{})$, and $(\X_m)_{m\in [M]}$ locally compact, second-countable topological spaces.  Let $k_m: \mathcal X_m \times \mathcal X_m \rightarrow \R$ be a bounded kernel, i.e.\ there exists  $a_{k_m} \in (0,\infty)$ such that $\sup_{x_m\in \X_m}\sqrt{k_m(x_m,x_m)} \leq a_{k_m}$ for $m \in [M]$. Let $a_k = \prod_{m=1}^M a_{k_m}$, $k=\otimes_{m=1}^M k_m$, $\H_k$ the RKHS associated to $k$, $\phi_k = \otimes_{m=1}^M \phi_{k_m}$, $C_k = \E_{}\left[\phi_{k}(X) \otimes \phi_{k}(X) \right]$, $n' \le n$, and $\tilde \P_{n'}$ defined according to \eqref{eq:Nystrom-samples}. Then for any $\delta \in (0,1)$ it holds that 
  \begin{align*}
    \norm{\mu_{k}\left(\P{}\right) - \mu_{k}\left(\tilde \P_{n'}\right)}{\H_k} \leq \frac{c_{k,1}}{ \sqrt{n}} + \frac{c_{k,2}}{n'} + \frac{c_{k,3}\sqrt{\log(n'/\delta)}}{n'}\sqrt{\mathcal N_{X}\left(\frac{12a_k^2\log(n'/\delta)}{n'}\right)},
  \end{align*}
   with probability at least $1-\delta$, provided that
  \begin{align*}
    n' \geq \max\left(67,12a_k^2\opnorm{C_k}^{-1}\right)\log\left(\frac{n'}{\delta}\right),
  \end{align*}
  where $c_{k,1}= 2a_k\sqrt{2\log(6/\delta)}$, $c_{k,2}=4\sqrt 3 a_k \log(12/\delta)$, and $c_{k,3}= 12\sqrt{3\log(12/\delta)}a_k$.
\end{lemma}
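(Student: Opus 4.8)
The plan is to recognize that Lemma~\ref{lemma:nystrom-cross-cov} is a direct instance of Theorem~\ref{thm:rudi} applied to the \emph{single} kernel $k = \otimes_{m=1}^M k_m$ living on the product space $\X = \times_{m=1}^M \X_m$. The stated bound, the three constants, and the admissibility condition on $n'$ all coincide with those of Theorem~\ref{thm:rudi} once the boundedness constant $K$ there is identified with $a_k$. Thus the whole argument reduces to checking that the hypotheses of Theorem~\ref{thm:rudi} are met in the tensor-product setting and then invoking it verbatim.

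First I would verify the topological hypothesis. Each $\X_m$ is locally compact and second-countable, and finite Cartesian products preserve both properties, so $\X$ is again locally compact and second-countable. Moreover $\H_k = \tphs$ is separable (as assumed throughout the paper, since the tensor product of separable Hilbert spaces is separable), so the trace-class machinery underpinning the effective dimension $\mathcal N_X(\cdot)$ and the operator $C_k$ is available.

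Next I would pin down the boundedness constant. The product structure of the kernel gives $k(x,x) = \prod_{m\in[M]} k_m(x_m,x_m)$ for $x = (x_m)_{m=1}^M$, hence $\sqrt{k(x,x)} = \prod_{m\in[M]} \sqrt{k_m(x_m,x_m)} \le \prod_{m\in[M]} a_{k_m} = a_k$. Taking the supremum over $x \in \X$ yields $\sup_{x\in\X}\sqrt{k(x,x)} \le a_k$, which is precisely the quantity playing the role of $K$ in Theorem~\ref{thm:rudi}.

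Finally, noting that $C_k = \E[\phi_k(X)\otimes\phi_k(X)]$ is exactly the uncentered covariance operator of the joint kernel appearing in Theorem~\ref{thm:rudi}, I would apply that theorem with $K = a_k$. This gives the claimed bound with probability at least $1-\delta$ under the condition $n' \ge \max(67, 12 a_k^2 \opnorm{C_k}^{-1})\log(n'/\delta)$, and substituting $K = a_k$ into $c_1, c_2, c_3$ produces $c_{k,1}, c_{k,2}, c_{k,3}$. There is no genuine obstacle in this argument; the only point demanding care is confirming that the product domain and the product kernel legitimately satisfy the hypotheses of the external theorem, so that it applies without modification.
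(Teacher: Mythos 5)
Your proposal is correct and matches the paper's own proof: both verify that the product space is locally compact and second-countable, that $\sqrt{k(x,x)}=\prod_{m\in[M]}\sqrt{k_m(x_m,x_m)}\le a_k$, and then invoke Theorem~\ref{thm:rudi} with $K=a_k$. No further comment is needed.
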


\begin{proof}
With $\X = \times_{m\in [M]} \X_m$, noticing that $\X$ is locally compact second-countable iff.\ $(\X_m)_{m\in [M]}$ are so \citep[Theorem~16.2(c), Theorem~18.6]{willard70general}, $\H_k = \tphs$, $\phi_k = \otimes_{m=1}^M \phi_{k_m} $, and $\sqrt{k(x,x)}= \prod_{m=1}^M\sqrt{k_m(x_m,x_m)} \leq a_k $, the statement is implied by Theorem~\ref{thm:rudi}.
\end{proof}

\begin{proof}[Proof of Lemma~\ref{lemma:decomposition}]
    
To simplify notation, let $\mu_{k_m} = \mu_{k_m}\left(\P_m\right)$, $\tilde \mu_{k_m} = \mu_{k_m}\left(\tilde\P_{m,n'}\right)$, $\H_k = \tphs$, and $d_{k_m} = \norm{\mu_{k_m}-\tilde \mu_{k_m}}{\H_{k_m}}$. The proof proceeds by induction on $M$: For $M=1$ the l.h.s. = r.h.s. = $\norm{\mu_{k_1}\left(\P_1\right) - \mu_{k_1}\left(\tilde\P_{1,n'}\right)}{\H_k} $ is satisfied, and
    we assume that the statement holds for $M = M-1$, to obtain
    \begin{eqnarray*}
      \lefteqn{\norm{\otimes_{m=1}^M \mu_{k_m} - \otimes_{m=1}^M \tilde\mu_{k_m}}{\H_k} = \big\|\otimes_{m=1}^M \mu_{k_m} - \otimes_{m=1}^{M-1} \mu_{k_m} \otimes \tilde\mu_{k_M} +  \otimes_{m=1}^{M-1} \mu_{k_m} \otimes \tilde\mu_{k_M}  - \otimes_{m=1}^M \tilde\mu_{k_m}\big\|_{\H_k}}  \\
    &&= \left\|\otimes_{m=1}^{M-1} \mu_{k_m} \otimes (\mu_{k_M} - \tilde\mu_{k_M}) \right. + \left. \left(\otimes_{m=1}^{M-1} \mu_{k_m}- \otimes_{m=1}^{M-1} \tilde\mu_{k_m}\right) \otimes \tilde\mu_{k_M} \right\|_{\H_k} \\
    && \stackrel{(a)}{\le} \left\|\otimes_{m=1}^{M-1} \mu_{k_m} \otimes (\mu_{k_M} - \tilde\mu_{k_M}) \right\|_{\H_k} + \left\|\left(\otimes_{m=1}^{M-1} \mu_{k_m}- \otimes_{m=1}^{M-1} \tilde\mu_{k_m}\right) \otimes \tilde\mu_{k_M} \right\|_{\H_k} \\
    &&\stackrel{(b)}{=} \left(\prod_{m \in [M-1]} \left\| \mu_{k_m}\right\|_{\H_{k_m}}\right) d_{k_M} + \left\|\otimes_{m=1}^{M-1} \mu_{k_m}- \otimes_{m=1}^{M-1} \tilde\mu_{k_m} \right\|_{\otimes_{m=1}^{M-1} \H_{k_m}}\left\|\tilde\mu_{k_M} \right\|_{\H_{k_M}} \\
    &&\stackrel{(c)}{\le} d_{k_M}\prod_{m \in [M-1]} a_{k_m}  + \left\|\otimes_{m=1}^{M-1} \mu_{k_m}- \otimes_{m=1}^{M-1} \tilde\mu_{k_m} \right\|_{\otimes_{m=1}^{M-1} \H_{k_m}}\left(a_{k_M} + d_{k_M}\right) \\
    &&\stackrel{(d)}{\le} d_{k_M}\prod_{m \in [M-1]} a_{k_m} + \left\{\prod_{m\in[M-1]}\left(a_{k_m} + d_{k_m}\right) - \prod_{m\in[M-1]}a_{k_m}\right\}\left(a_{k_M} + d_{k_M}\right) \\
    &&= d_{k_M}\prod_{m \in [M-1]} a_{k_m} + \prod_{m\in[M]}\left(a_{k_m} + d_{k_m}\right) - \prod_{m\in[M]}a_{k_m} - d_{k_M}\prod_{m \in [M-1]}a_{k_m} \\
    &&= \prod_{m\in[M]}\left(a_{k_m} + d_{k_m}\right) - \prod_{m\in[M]}a_{k_m},
    \end{eqnarray*}
    where (a) holds by the triangle inequality, (b) is implied by \eqref{eq:tensor:norm} and the definition of $d_{k_M}$, (c) follows from
    \begin{align}
    \left\| \mu_{k_m}\right\|_{\H_{k_m}} & = \left\| \int_{\X_m} k_m(\cdot,x_m) \d \P_m (x_m)\right\|_{\H_{k_m}} \stackrel{(e)}{\le} \int_{\X_m} \underbrace{\left\|k_m(\cdot,x_m) \right\|_{\H_{k_m}}}_{\stackrel{(f)}{=}\sqrt{k_m(x_m,x_m)} \stackrel{(g)}{\le} a_{k_m}} \d \P_m (x_m) \le a_{k_m}, \label{eq:ME:bound}\\
    \norm{\tilde\mu_{k_M}}{\H_{k_M}} & = \norm{\tilde\mu_{k_M} - \mu_{k_M} + \mu_{k_M}}{\H_{k_M}} \stackrel{(h)}{\le} \norm{\tilde\mu_{k_M} - \mu_{k_M}}{\H_{k_M}} + \norm{\mu_{k_M}}{\H_{k_M}} \stackrel{(i)}{\le} d_{k_M} + a_{k_M}, \nonumber
    \end{align}
    (d) is valid by the induction statement holding for $M-1$, 
    (e) is a property of Bochner integrals, (f) is implied by the reproducing property, (g) comes from the definition of $a_{k_m}$, the triangle inequality implies (h), (i) follows from \eqref{eq:ME:bound} and the definition of $d_{k_M}$.
  \end{proof}

\subsubsection{Proof of Proposition \ref{thm:error-nystrom-hsic}}
\label{sec:error-nystrom-hsic}

Let $k = \otimes_{m=1}^M k_m$, and let $\H_k = \otimes_{m=1}^M \H_{k_m}$. We note that $\X=\times_{m \in [M]} \X_m$ is locally compact second-countable as $(\X_m)_{m\in [M]}$ are so \citep[Theorem~16.2(c), Theorem~18.6]{willard70general}.

We decompose the error of the Nyström approximation as 
  \begin{align*}
    \left|\mathrm{HSIC}_k(\P{}) - \HSIC_{k,\text N}\left(\hat \P_{n}\right) \right| 
    &= \left|\norm{\mu_{k}(\P{})- \otimes_{m=1}^M \mu_{k_m}( \P_{m}) }{\H_k} - \norm{ \kmeP- \otimes_{m=1}^M \kmePm}{\H_k} \right| \\
     &\stackrel{(a)}{\le} \norm{\mu_{k}(\P{})- \otimes_{m=1}^M \mu_{k_m}( \P_{m})  - \kmeP +  \otimes_{m=1}^M \kmePm}{\H_k} \\
    &\stackrel{(b)}{\le} \underbrace{\norm{\mu_{k}(\P{})- \kmeP }{\H_{k}}}_{t_1} +\underbrace{\norm{ \otimes_{m=1}^M \mu_{k_m}( \P_{m})  -  \otimes_{m=1}^M \kmePm}{\H_{k}}}_{t_2}, 
  \end{align*}
  where (a) holds by the reverse triangle inequality, and (b) follows from the triangle inequality. 
  
  \tb{First term ($t_1$)}:  One can bound the error of the first term by Lemma~\ref{lemma:nystrom-cross-cov}; in other words, for any $\delta \in (0,1)$ with probability at least $(1-\delta)$ it holds that
  \begin{align*}
      \norm{\mu_{k}(\P{})- \kmeP }{\H_{k}} \le \frac{c_{k,1}}{ \sqrt{n}} + \frac{c_{k,2}}{n'} + \frac{c_{k,3}\sqrt{\log(n'/\delta)}}{n'}\sqrt{\mathcal N_{X_m}\left(\frac{12a_{k_m}^2\log(n'/\delta)}{n'}\right)}
  \end{align*}
   provided that $n' \geq \max\left(67,12a_{k}^2\opnorm{C_{k}}^{-1}\right)\log\left(\frac{n'}{\delta}\right)$,
  with the constants $c_{k,1} =  2a_{k}\sqrt{2\log(6/\delta)}$, $c_{k,2}=4\sqrt 3 a_{k} \log(12/\delta)$, $c_{k,3} =  12\sqrt{3\log(12/\delta)}a_{k}$.
  
  \tb{Second term} ($t_2$): Applying Lemma~\ref{lemma:decomposition} to the second term gives
  \begin{align*}
     \norm{ \otimes_{m=1}^M \mu_{k_m}( \P_{m})  -  \otimes_{m=1}^M \kmePm}{\H_{k}} &\le  \prod_{m\in[M]}\left(a_{k_m}+\norm{\mu_{k_m}\left(\P_m\right) -  \mu_{k_m}\left(\tilde \P_{m,n'}\right)}{\H_{k_m}}\right) - \prod_{m\in[M]}a_{k_m}.
  \end{align*}
We now bound the error of each of the $M$ factors by Theorem~\ref{thm:rudi}, i.e., for fixed $m \in [M]$; particularly we get that for any $\delta\in(0,1)$ with probability at least $1-\delta$ 
\begin{align*}
      \norm{\mu_{k_m}\left(\P_m\right) - \kmePm}{\H_{k_m}} &\le \frac{c_{k_m,1}}{ \sqrt{n}} + \frac{c_{k_m,2}}{n'} + \frac{c_{k_m,3}\sqrt{\log(n'/\delta)}}{n'}\sqrt{\mathcal N_{X_m}\left(\frac{12a_{k_m}^2\log(n'/\delta)}{n'}\right)}, \text{ hence} \\
      a_{k_m} + \norm{\mu_{k_m}\left(\P_m\right) - \kmePm}{\H_{k_m}} &\le a_{k_m} + \frac{c_{k_m,1}}{ \sqrt{n}} + \frac{c_{k_m,2}}{n'} + \frac{c_{k_m,3}\sqrt{\log(n'/\delta)}}{n'}\sqrt{\mathcal N_{X_m}\left(\frac{12a_{k_m}^2\log(n'/\delta)}{n'}\right)},
\end{align*}
and by union bound that their product is for any $\delta \in (0,\frac{1}{M})$ with probability at least $1-M\delta$ 
\begin{eqnarray*}
      \lefteqn{\prod_{m\in[M]}\left[a_{k_m} + \norm{\mu_{k_m}\left(\P_m\right) - \kmePm}{\H_{k_m}} \right]  \le}\\
      && \le \prod_{m\in[M]}\Bigg[a_{k_m} + \frac{c_{k_m,1}}{ \sqrt{n}} + \frac{c_{k_m,2}}{n'} + \frac{c_{k_m,3}\sqrt{\log(n'/\delta)}}{n'}\sqrt{\mathcal N_{X_m}\left(\frac{12a_{k_m}^2\log(n'/\delta)}{n'}\right)}\Bigg],
\end{eqnarray*}
\begin{eqnarray*}      
      \lefteqn{\prod_{m\in[M]}\left[a_{k_m} + \norm{\mu_{k_m}\left(\P_m\right) - \kmePm}{\H_{k_m}} \right] -\prod_{m\in[M]}a_{k_m} \le}\\
      && \le \prod_{m\in[M]}\Bigg[a_{k_m} + \frac{c_{k_m,1}}{ \sqrt{n}} + \frac{c_{k_m,2}}{n'} + \frac{c_{k_m,3}\sqrt{\log(n'/\delta)}}{n'}\sqrt{\mathcal N_{X_m}\left(\frac{12a_{k_m}^2\log(n'/\delta)}{n'}\right)}\Bigg] -\prod_{m\in[M]}a_{k_m},
\end{eqnarray*}
  provided that $
    n' \geq \max\left(67,12a_{k_m}^2\opnorm{C_{k_m}}^{-1}\right)\log\left(\frac{n'}{\delta}\right)$
    for all $m\in [M]$, 
  with  $C_{k_m} = \E\left[\phi_{k_m}(X_m) \otimes \phi_{k_m}(X_m)\right]$ and constants $c_{k_m,1} =  2a_{k_m}\sqrt{2\log(6/\delta)}$, $c_{k_m,2}=4\sqrt 3 a_{k_m} \log(12/\delta)$, $c_{k_m,3} =  12\sqrt{3\log(12/\delta)}a_{k_m}$, with $m\in[M]$.
  
  Combining the $M+1$ terms by union bound yields the stated result.

\subsubsection{Lemma to the Proof of Lemma~\ref{lemma:deviation}}
\label{sec:u-stat-deviation}
\begin{lemma}[Deviation bound for U-statistics based HSIC estimator]
\label{lemma:deviation-u-stat}It holds that
\begin{align*}
    \left|\HSIC_{k,u}^2\left(\hat\P_n\right)-\HSIC_{k}^2\left(\P\right)\right| = \O_{\text{P}}\left(\frac{1}{\sqrt{n}}\right),
\end{align*}
where $\HSIC_{k,u}^2$ is the U-statistic based estimator of $\HSIC_k^2$.
\end{lemma}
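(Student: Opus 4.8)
The plan is to realize $\HSIC_k^2(\P)$ as the expectation of a bounded symmetric core function and then invoke Hoeffding's inequality for U-statistics (Theorem~\ref{thm:hoeffding}). First I would expand the squared norm of the centered cross-covariance operator,
\begin{align*}
\HSIC_k^2(\P) = \norm{C_X}{\H_k}^2 = \norm{\mu_k(\P)}{\H_k}^2 - 2\ip{\mu_k(\P), \otimes_{m=1}^M \mu_{k_m}(\P_m)}{\H_k} + \norm{\otimes_{m=1}^M \mu_{k_m}(\P_m)}{\H_k}^2,
\end{align*}
and rewrite each term as an expectation of products of the kernels $k_m$ over \emph{independent copies} of $X$ and of its marginals, using the reproducing property and \eqref{eq:tensor:inner-product}. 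The first term needs two joint copies, the cross term one joint copy together with $M$ independent marginal copies, and the last term $2M$ independent marginal copies.

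Second, I would assemble these three expectations into a single expectation $\theta = \E\left[h(Z_1,\dots,Z_{2M})\right]$ of one symmetric core $h$ of fixed degree $2M$ evaluated on $2M$ i.i.d.\ copies $Z_i = (Z_{i,1},\dots,Z_{i,M})$ of $X$; the estimator $\HSIC_{k,u}^2(\hat\P_n)$ is then, by construction, the associated U-statistic $U_n$ of \eqref{eq:def-u-stat}, which is unbiased, i.e.\ $\E[U_n] = \theta = \HSIC_k^2(\P)$. Crucially, since each $k_m$ is bounded (by assumption $\sup_{x_m}\sqrt{k_m(x_m,x_m)} \le a_{k_m}$, hence $|k_m| \le a_{k_m}^2$), the core $h$ is bounded, $a \le h \le b$, with $b-a$ a finite constant depending only on $(a_{k_m})_{m=1}^M$ and $M$.

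Third, I would apply Theorem~\ref{thm:hoeffding} to $h$ and, for the lower tail, to $-h$ (whose core is bounded on $[-b,-a]$); a union bound then yields the two-sided deviation inequality
\begin{align*}
\P\left(\left|U_n - \theta\right| \ge u\right) \le 2\exp\left(-\frac{2nu^2}{2M(b-a)^2}\right)
\end{align*}
for every $u > 0$ and $n \ge 2M$. Finally, setting $u = C/\sqrt{n}$ makes the right-hand side equal to $2\exp\left(-C^2/(M(b-a)^2)\right)$, which can be driven below any prescribed $\epsilon > 0$ by choosing $C$ large, uniformly in $n$; this is exactly the statement that $\sqrt{n}\,|U_n - \theta|$ is bounded in probability, i.e.\ $\left|\HSIC_{k,u}^2(\hat\P_n) - \HSIC_k^2(\P)\right| = \O_{\text{P}}\left(n^{-1/2}\right)$.

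The main obstacle I anticipate is the bookkeeping in the second step: writing down an \emph{explicit symmetric} core $h$ of common degree $2M$ whose expectation reproduces all three terms simultaneously, since they naturally require different numbers of independent copies ($2$, $1+M$, and $2M$, respectively). This forces one to pad each term up to $2M$ arguments and to symmetrize over the $2M$ indices while verifying that the expectation is preserved and that distinct coordinates respect the independence structure of $\otimes_{m=1}^M \P_m$. Once $h$ is pinned down, its boundedness and the concentration argument are routine given Theorem~\ref{thm:hoeffding}.
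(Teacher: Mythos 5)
Your proposal is correct and follows essentially the same route as the paper: expand $\HSIC_k^2(\P)$ into the three expectations $A+B-2C$, note that the bounded kernels give a bounded core of degree $2M$, and apply Hoeffding's inequality for U-statistics (Theorem~\ref{thm:hoeffding}) with $u\sim C/\sqrt n$ to conclude $\O_{\text{P}}(n^{-1/2})$. The only difference is that the paper keeps $A$, $B$, $C$ as three \emph{separate} U-statistics and splits the threshold $t=\alpha t+\beta t+(1-\alpha-\beta)t$ before applying Hoeffding to each, which entirely sidesteps the pad-and-symmetrize bookkeeping for a single common core that you flag as your main obstacle.
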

\begin{proof}
We show that \eqref{eq:def-hsic} can be expressed as a sum of U-statistics and then bound the terms individually. First, square \eqref{eq:def-hsic} to obtain
\begin{align*}
  \HSIC_k^2(\P) &= \underbrace{\E_{(x_1,\dots,x_M),(x'_1,\dots,x_M')\sim\P}\left[\prod_{m\in[M]}k_m\left(x_m,x_m'\right)\right]}_A +
                  \underbrace{\E_{x_1,x_1'\sim\P_1,\dots,x_M,x_M'\sim \P_M}\left[\prod_{m\in[M]}k_m\left(x_m,x_m'\right)\right]}_B \\
                &\quad\quad- 2\underbrace{\E_{(x_1,\dots,x_M)\sim\P, x_1'\sim\P_1, \dots, x_M'\sim\P_M}\left[\prod_{m\in[M]}k_m(x_m,x_m')\right]}_C,
\end{align*}
where $A$, $B$, and $C$ can be estimated by U-statistics $A'_n$, $B'_n$, and $C'_n$, respectively. Let $\HSIC_{k,u}^2\left(\hat \P_n\right) = A'_n + B'_n - 2C'_n$, and split $t$ as $\alpha t +\beta t + (1-\alpha-\beta)t$, with $\alpha, \beta > 0$ and $\alpha + \beta < 1$. One obtains
\begin{align*}
  P\left(\left|\HSIC_k^2(\P) - \HSIC_{k,u}^2\left(\hat\P_n\right)\right | \ge t \right) \leq P\left(\left|A-A'_n\right| \ge \alpha t\right) \hspace{-0.1cm}+ \hspace{-0.1cm} P\left(\left|B-B'_n\right| \ge \beta t\right) \hspace{-0.1cm}+\hspace{-0.1cm} P\left(2\left|C-C'_n\right| \ge (1-\alpha-\beta) t\right).
\end{align*}
Doubling and rewriting Theorem~\ref{thm:hoeffding}, we have that for U-statistics and any $\delta \in (0,1)$
\begin{align*}
  \P\left(\left|U_n-\theta\right| \ge \sqrt{\frac{m(b-a)^2\ln(\frac 2 \delta)}{2n}}\right) \leq \delta.
\end{align*}
Now, choosing the $\left(\theta, U_n, u\right)$ triplet to be $\left(A,A_n',\alpha t\right)$, $\left(B,B_n', \beta t \right)$, $\left(C,C_n', \frac{(1-\alpha-\beta)t}{2}\right)$, respectively, setting $m=2M$, and observing that $a \le k(x,y) \leq b$ as $k$ is bounded, we obtain that $|A'_n-A|\sqrt{n}$, $|B'_n-B|\sqrt{n}$, and $|C'_n-C|\sqrt{n}$ are bounded in probability and so is their sum.
\end{proof}
  
  \subsubsection{Proof of Lemma~\ref{lemma:deviation}}
  \label{sec:proof-lemma-refl}

We consider the decomposition
\begin{align}
    \left|\HSIC_k^2\left(\hat\P_n\right) - \HSIC_k^2\left(\P\right)\right| \le \underbrace{\left|\HSIC_k^2\left(\hat\P_n\right) - \HSIC_{k,u}^2\left(\hat\P_n\right) \right|}_{t_1} +\underbrace{\left|\HSIC_{k,u}^2\left(\hat\P_n\right) - \HSIC^2_k\left(\P\right)\right|}_{t_2}, \label{eq:decompo-1}
\end{align}
by using the triangle inequality,  where $\HSIC_{k,u}$ is the U-statistic based HSIC estimator.

\tb{Second term ($t_2$)}: Lemma~\ref{lemma:deviation-u-stat} establishes that $t_2=\O_{\text{P}}\left(\frac{1}{\sqrt{n}}\right)$. 

\tb{First term ($t_1$)}: To bound $t_1$, first, by  Markov's inequality (Lemma~\ref{lemma:markov}) observe that 
    \begin{align}
    \P\left(\left|\HSIC_k^2\left(\hat\P_n\right) - \HSIC_{k,u}^2\left(\hat\P_n\right) \right| \ge a \right) &\le \underbrace{\frac{\E\left(\left|\HSIC_k^2\left(\hat\P_n\right) - \HSIC_{k,u}^2\left(\hat\P_n\right) \right|\right)}{a}}_{=: \epsilon}, \nonumber\\
    \P\left(\left|\HSIC_k^2\left(\hat\P_n\right) - \HSIC_{k,u}^2\left(\hat\P_n\right) \right| \ge \frac{\E\left(\left|\HSIC_k^2\left(\hat\P_n\right) - \HSIC_{k,u}^2\left(\hat\P_n\right) \right|\right)}{\epsilon} \right)&\le \epsilon, \nonumber\\
    \P\left(\left|\HSIC_k^2\left(\hat\P_n\right) - \HSIC_{k,u}^2\left(\hat\P_n\right) \right| < \frac{\E\left(\left|\HSIC_k^2\left(\hat\P_n\right) - \HSIC_{k,u}^2\left(\hat\P_n\right) \right|\right)}{\epsilon} \right) &\ge 1- \epsilon, \nonumber\\
    \P\left(\left|\HSIC_k^2\left(\hat\P_n\right) - \HSIC_{k,u}^2\left(\hat\P_n\right) \right| < \frac{C}{n\epsilon} \right) &\stackrel{(*)}{\ge} 1- \epsilon, \nonumber\\
    \P\left(\left|\HSIC_k^2\left(\hat\P_n\right) - \HSIC_{k,u}^2\left(\hat\P_n\right) \right| \ge \frac{C}{n\epsilon} \right) &\le  \epsilon, \label{eq:HSIC-1/n}
\end{align}
for constant $C >0$ and $n$ large enough, where $(*)$ follows from Lemma~\ref{lemma:conn-u-v} (with $r=1$). \eqref{eq:HSIC-1/n} implies that 
\begin{align*}
    \left|\HSIC_k^2\left(\hat\P_n\right) - \HSIC_{k,u}^2\left(\hat\P_n\right) \right| = \O_{\text{P}}\left(\frac{1}{n}\right).
\end{align*}
\tb{Combining the terms ($t_1 + t_2$)}: Combining the obtained results for the two terms, one gets that
\begin{align}
\left|\HSIC_k^2\left(\hat\P_n\right) - \HSIC_k^2\left(\P\right)\right| &\stackrel{\eqref{eq:decompo-1}}{\le}     \left|\HSIC_k^2\left(\hat\P_n\right) - \HSIC_{k,u}^2\left(\hat\P_n\right) \right| +\left|\HSIC_{k,u}^2\left(\hat\P_n\right) - \HSIC_k^2\left(\P\right)\right| \nonumber\\
& =  \O_{\text{P}}\left(\frac{1}{n}\right) + \O_{\text{P}}\left(\frac{1}{\sqrt n}\right) = \O_{\text{P}}\left(\frac{1}{\sqrt n}\right). \label{eq:decompo-2}
\end{align}
Hence
\begin{align*}
    \O_{\text{P}}\left(\frac{1}{\sqrt n}\right) & \stackrel{\eqref{eq:decompo-2}}{\ge}  \left|\HSIC_k^2\left(\hat\P_n\right) - \HSIC_k^2\left(\P\right)\right| = \left|\HSIC_k\left(\hat\P_n\right) - \HSIC_k\left(\P\right)\right|
     \Big|\underbrace{\HSIC_k\left(\hat\P_n\right)}_{\stackrel{\eqref{eq:emp-hsic}}{\ge} 0} + \underbrace{\HSIC_k\left(\P\right)}_{\ge 0}\Big| \\
    &\ge \left|\HSIC_k\left(\hat\P_n\right) - \HSIC_k\left(\P\right)\right|\HSIC_k\left(\P\right),
\end{align*}
which by dividing with the constant $\HSIC_k\left(\P\right)>0$ implies the statement.

\begin{acknowledgements}
This work was supported by the German Research Foundation
(DFG) Research Training Group GRK 2153: Energy Status Data –-
Informatics Methods for its Collection, Analysis and Exploitation. The major part of this work was carried out while Florian Kalinke was a research associate at the Department of Statistics, London School of Economics.
\end{acknowledgements}

\bibliography{BIB/collected_zoltan,BIB/publications}

\begin{thebibliography}{55}
\providecommand{\natexlab}[1]{#1}
\providecommand{\url}[1]{\texttt{#1}}
\expandafter\ifx\csname urlstyle\endcsname\relax
  \providecommand{\doi}[1]{doi: #1}\else
  \providecommand{\doi}{doi: \begingroup \urlstyle{rm}\Url}\fi

\bibitem[Albert et~al.(2022)Albert, Laurent, Marrel, and
  Meynaoui]{albert22adaptive}
M\'{e}lisande Albert, B\'{e}atrice Laurent, Amandine Marrel, and Anouar
  Meynaoui.
\newblock Adaptive test of independence based on {HSIC} measures.
\newblock \emph{The Annals of Statistics}, 50\penalty0 (2):\penalty0 858--879,
  2022.

\bibitem[Aronszajn(1950)]{aronszajn50theory}
Nachman Aronszajn.
\newblock Theory of reproducing kernels.
\newblock \emph{Transactions of the American Mathematical Society},
  68:\penalty0 337--404, 1950.

\bibitem[Berlinet and Thomas-Agnan(2004)]{berlinet04reproducing}
Alain Berlinet and Christine Thomas-Agnan.
\newblock \emph{Reproducing Kernel Hilbert Spaces in Probability and
  Statistics}.
\newblock Kluwer, 2004.

\bibitem[Bertin-Mahieux et~al.(2011)Bertin-Mahieux, Ellis, Whitman, and
  Lamere]{bertin11million}
Thierry Bertin-Mahieux, Daniel P.~W. Ellis, Brian Whitman, and Paul Lamere.
\newblock The million song dataset.
\newblock In \emph{International Society for Music Information Retrieval
  Conference (ISMIR)}, pages 591--596, 2011.

\bibitem[Borgwardt et~al.(2020)Borgwardt, Ghisu, Llinares-L{\'o}pez, O'Bray,
  and Riec]{borgwardt20graph}
Karsten Borgwardt, Elisabetta Ghisu, Felipe Llinares-L{\'o}pez, Leslie O'Bray,
  and Bastian Riec.
\newblock Graph kernels: State-of-the-art and future challenges.
\newblock \emph{Foundations and Trends in Machine Learning}, 13\penalty0
  (5-6):\penalty0 531--712, 2020.

\bibitem[Bouche et~al.(2023)Bouche, Flamary, d’Alch{\'e} Buc, Plougonven,
  Clausel, Badosa, and Drobinski]{bouche23wind}
Dimitri Bouche, R{\'e}mi Flamary, Florence d’Alch{\'e} Buc, Riwal Plougonven,
  Marianne Clausel, Jordi Badosa, and Philippe Drobinski.
\newblock Wind power predictions from nowcasts to 4-hour forecasts: a learning
  approach with variable selection.
\newblock \emph{Renewable Energy}, 2023.

\bibitem[Camps-Valls et~al.(2010)Camps-Valls, Mooij, and
  Sch{\"{o}}lkopf]{camps10remote}
Gustavo Camps-Valls, Joris~M. Mooij, and Bernhard Sch{\"{o}}lkopf.
\newblock Remote sensing feature selection by kernel dependence measures.
\newblock \emph{{IEEE} {G}eoscience and {R}emote {S}ensing {L}etters},
  7\penalty0 (3):\penalty0 587--591, 2010.

\bibitem[Chakraborty and Zhang(2019)]{chakraborty19distance}
Shubhadeep Chakraborty and Xianyang Zhang.
\newblock Distance metrics for measuring joint dependence with application to
  causal inference.
\newblock \emph{Journal of the American Statistical Association}, 114\penalty0
  (528):\penalty0 1638--1650, 2019.

\bibitem[Chatalic et~al.(2022)Chatalic, Schreuder, Rudi, and
  Rosasco]{chatalic22nystrom}
Antoine Chatalic, Nicolas Schreuder, Alessandro Rudi, and Lorenzo Rosasco.
\newblock Nystr{\"o}m kernel mean embeddings.
\newblock In \emph{International Conference on Machine Learning (ICML)}, pages
  3006--3024, 2022.

\bibitem[Chwialkowski et~al.(2015)Chwialkowski, Ramdas, Sejdinovic, and
  Gretton]{chwialkowski15fast}
Kacper Chwialkowski, Aaditya Ramdas, Dino Sejdinovic, and Arthur Gretton.
\newblock Fast two-sample testing with analytic representations of probability
  measures.
\newblock In \emph{Advances in Neural Information Processing Systems (NIPS)},
  pages 1972--1980, 2015.

\bibitem[Climente-Gonz{\'a}lez et~al.(2019)Climente-Gonz{\'a}lez, Azencott,
  Kaski, and Yamada]{gonzalez19block}
Héctor Climente-Gonz{\'a}lez, Chlo{\'e}-Agathe Azencott, Samuel Kaski, and
  Makoto Yamada.
\newblock Block {HSIC} {L}asso: model-free biomarker detection for ultra-high
  dimensional data.
\newblock \emph{Bioinformatics}, 35\penalty0 (14):\penalty0 i427--i435, 2019.

\bibitem[Fukumizu et~al.(2008)Fukumizu, Gretton, Sun, and
  Sch{\"o}lkopf]{fukumizu08kernel}
Kenji Fukumizu, Arthur Gretton, Xiaohai Sun, and Bernhard Sch{\"o}lkopf.
\newblock Kernel measures of conditional dependence.
\newblock In \emph{Advances in Neural Information Processing Systems (NIPS)},
  pages 498--496, 2008.

\bibitem[G{\"a}rtner et~al.(2002)G{\"a}rtner, Flach, Kowalczyk, and
  Smola]{gartner02multi}
Thomas G{\"a}rtner, Peter Flach, Adam Kowalczyk, and Alexander Smola.
\newblock Multi-instance kernels.
\newblock In \emph{International Conference on Machine Learning (ICML)}, pages
  179--186, 2002.

\bibitem[Gretton et~al.(2005)Gretton, Bousquet, Smola, and
  Sch{\"o}lkopf]{gretton05measuring}
Arthur Gretton, Olivier Bousquet, Alex Smola, and Bernhard Sch{\"o}lkopf.
\newblock Measuring statistical dependence with {H}ilbert-{S}chmidt norms.
\newblock In \emph{Algorithmic Learning Theory (ALT)}, pages 63--78, 2005.

\bibitem[Gretton et~al.(2008)Gretton, Fukumizu, Teo, Song, Sch{\"o}lkopf, and
  Smola]{gretton08kernel}
Arthur Gretton, Kenji Fukumizu, Choon~Hui Teo, Le~Song, Bernhard Sch{\"o}lkopf,
  and Alexander Smola.
\newblock A kernel statistical test of independence.
\newblock In \emph{Advances in Neural Information Processing Systems (NIPS)},
  pages 585--592, 2008.

\bibitem[Gretton et~al.(2012)Gretton, Borgwardt, Rasch, Sch{\"o}lkopf, and
  Smola]{gretton12kernel}
Arthur Gretton, Karsten Borgwardt, Malte Rasch, Bernhard Sch{\"o}lkopf, and
  Alexander Smola.
\newblock A kernel two-sample test.
\newblock \emph{Journal of Machine Learning Research}, pages 723--773, 2012.

\bibitem[Guevara et~al.(2017)Guevara, Hirata, and Canu]{guevara17cross}
Jorge Guevara, Roberto Hirata, and St{\'e}phane Canu.
\newblock Cross product kernels for fuzzy set similarity.
\newblock In \emph{International Conference on Fuzzy Systems (FUZZ-IEEE)},
  pages 1--6, 2017.

\bibitem[Hagrass et~al.(2022)Hagrass, Sriperumbudur, and Li]{hagrass22spectral}
Omar Hagrass, Bharath~K Sriperumbudur, and Bing Li.
\newblock Spectral regularized kernel two-sample tests.
\newblock Technical report, 2022.
\newblock (\url{https://arxiv.org/abs/2212.09201}).

\bibitem[Haussler(1999)]{haussler99convolution}
David Haussler.
\newblock Convolution kernels on discrete structures.
\newblock Technical report, University of California at Santa Cruz, 1999.
\newblock
  (\url{http://cbse.soe.ucsc.edu/sites/default/files/convolutions.pdf}).

\bibitem[Huang et~al.(2022)Huang, Deb, and Sen]{huang22kernel}
Zhen Huang, Nabarun Deb, and Bodhisattva Sen.
\newblock Kernel partial correlation coefficient -- a measure of conditional
  dependence.
\newblock \emph{Journal of Machine Learning Research}, 23\penalty0
  (216):\penalty0 1--58, 2022.

\bibitem[Jiao and Vert(2016)]{jiao16kendall}
Yunlong Jiao and Jean-Philippe Vert.
\newblock The {K}endall and {M}allows kernels for permutations.
\newblock In \emph{International Conference on Machine Learning (ICML)}, pages
  2982--2990, 2016.

\bibitem[Jitkrittum et~al.(2016)Jitkrittum, Szab{\'o}, Chwialkowski, and
  Gretton]{jitkrittum16interpretable}
Wittawat Jitkrittum, Zolt{\'a}n Szab{\'o}, Kacper Chwialkowski, and Arthur
  Gretton.
\newblock Interpretable distribution features with maximum testing power.
\newblock In \emph{Advances in Neural Information Processing Systems (NIPS)},
  pages 181--189, 2016.

\bibitem[Jitkrittum et~al.(2017)Jitkrittum, Szab{\'o}, and
  Gretton]{jitkrittum17adaptive2}
Wittawat Jitkrittum, Zolt{\'a}n Szab{\'o}, and Arthur Gretton.
\newblock An adaptive test of independence with analytic kernel embeddings.
\newblock In \emph{International Conference on Machine Learning (ICML)}, pages
  1742--1751, 2017.

\bibitem[Kir{\'a}ly and Oberhauser(2019)]{kiraly19kernel}
Franz~J. Kir{\'a}ly and Harald Oberhauser.
\newblock Kernels for sequentially ordered data.
\newblock \emph{Journal of Machine Learning Research}, 20:\penalty0 1--45,
  2019.

\bibitem[Liu et~al.(2020)Liu, Xu, Liu, Zhang, Gretton, and
  Sutherland]{liu20learning}
Feng Liu, Wenkai Xu, Jie Liu, Guangquan Zhang, Arthur Gretton, and Danica~J.
  Sutherland.
\newblock Learning deep kernels for non-parametric two-sample tests.
\newblock In \emph{International Conference on Machine Learning (ICML)}, pages
  6316--6326, 2020.

\bibitem[Lodhi et~al.(2002)Lodhi, Saunders, Shawe-Taylor, Cristianini, and
  Watkins]{lodhi02text}
Huma Lodhi, Craig Saunders, John Shawe-Taylor, Nello Cristianini, and Chris
  Watkins.
\newblock Text classification using string kernels.
\newblock \emph{Journal of Machine Learning Research}, 2:\penalty0 419--444,
  2002.

\bibitem[Lyons(2013)]{lyons13distance}
Russell Lyons.
\newblock Distance covariance in metric spaces.
\newblock \emph{The Annals of Probability}, 41:\penalty0 3284--3305, 2013.

\bibitem[Micchelli et~al.(2006)Micchelli, Xu, and Zhang]{micchelli06universal}
Charles Micchelli, Yuesheng Xu, and Haizhang Zhang.
\newblock Universal kernels.
\newblock \emph{Journal of Machine Learning Research}, 7:\penalty0 2651--2667,
  2006.

\bibitem[Mooij et~al.(2016)Mooij, Peters, Janzing, Zscheischler, and
  Sch{\"o}lkopf]{mooij16distinguishing}
Joris Mooij, Jonas Peters, Dominik Janzing, Jakob Zscheischler, and Bernhard
  Sch{\"o}lkopf.
\newblock Distinguishing cause from effect using observational data: Methods
  and benchmarks.
\newblock \emph{Journal of Machine Learning Research}, 17:\penalty0 1--102,
  2016.

\bibitem[M{\"u}ller(1997)]{muller97integral}
Alfred M{\"u}ller.
\newblock Integral probability metrics and their generating classes of
  functions.
\newblock \emph{Advances in Applied Probability}, 29:\penalty0 429--443, 1997.

\bibitem[Pfister et~al.(2018)Pfister, B{\"u}hlmann, Sch{\"o}lkopf, and
  Peters]{pfister18kernel}
Niklas Pfister, Peter B{\"u}hlmann, Bernhard Sch{\"o}lkopf, and Jonas Peters.
\newblock Kernel-based tests for joint independence.
\newblock \emph{Journal of the Royal Statistical Society: Series B (Statistical
  Methodology)}, pages 5--31, 2018.

\bibitem[Quadrianto et~al.(2009)Quadrianto, Song, and
  Smola]{quadrianto09kernelized}
Novi Quadrianto, Le~Song, and Alex Smola.
\newblock Kernelized sorting.
\newblock In \emph{Advances in Neural Information Processing Systems (NIPS)},
  pages 1289--1296, 2009.

\bibitem[Rahimi and Recht(2007)]{rahimi07random}
Ali Rahimi and Benjamin Recht.
\newblock Random features for large-scale kernel machines.
\newblock In \emph{Advances in Neural Information Processing Systems (NIPS)},
  pages 1177--1184, 2007.

\bibitem[Sch{\"o}lkopf and Smola(2002)]{scholkopf02learning}
Bernhard Sch{\"o}lkopf and Alexander Smola.
\newblock \emph{Learning with Kernels: Support Vector Machines, Regularization,
  Optimization, and Beyond}.
\newblock MIT Press, 2002.

\bibitem[Sch{\"{o}}lkopf et~al.(2021)Sch{\"{o}}lkopf, Locatello, Bauer, Ke,
  Kalchbrenner, Goyal, and Bengio]{scholkopf21causal}
Bernhard Sch{\"{o}}lkopf, Francesco Locatello, Stefan Bauer, Nan~Rosemary Ke,
  Nal Kalchbrenner, Anirudh Goyal, and Yoshua Bengio.
\newblock Toward causal representation learning.
\newblock \emph{Proceedings of the {IEEE}}, 109\penalty0 (5):\penalty0
  612--634, 2021.

\bibitem[Schrab et~al.(2022)Schrab, Kim, Guedj, and
  Gretton]{schrab22incompleteu}
Antonin Schrab, Ilmun Kim, Benjamin Guedj, and Arthur Gretton.
\newblock Efficient aggregated kernel tests using incomplete {U}-statistics.
\newblock In \emph{Advances in Neural Information Processing Systems ({NIPS})},
  pages 18793--18807, 2022.

\bibitem[Sejdinovic et~al.(2013{\natexlab{a}})Sejdinovic, Gretton, and
  Bergsma]{sejdinovic13kernel}
Dino Sejdinovic, Arthur Gretton, and Wicher Bergsma.
\newblock A kernel test for three-variable interactions.
\newblock In \emph{Advances in Neural Information Processing Systems (NIPS)},
  pages 1124--1132, 2013{\natexlab{a}}.

\bibitem[Sejdinovic et~al.(2013{\natexlab{b}})Sejdinovic, Sriperumbudur,
  Gretton, and Fukumizu]{sejdinovic13equivalence}
Dino Sejdinovic, Bharath Sriperumbudur, Arthur Gretton, and Kenji Fukumizu.
\newblock Equivalence of distance-based and {RKHS}-based statistics in
  hypothesis testing.
\newblock \emph{Annals of Statistics}, 41:\penalty0 2263--2291,
  2013{\natexlab{b}}.

\bibitem[Serfling(1980)]{serfling80approximation}
Robert~J. Serfling.
\newblock \emph{Approximation theorems of mathematical statistics}.
\newblock John Wiley \& Sons, 1980.

\bibitem[Sheng and Sriperumbudur(2023)]{sheng23distance}
Tianhong Sheng and Bharath~K. Sriperumbudur.
\newblock On distance and kernel measures of conditional independence.
\newblock \emph{Journal of Machine Learning Research}, 24\penalty0
  (7):\penalty0 1--16, 2023.

\bibitem[Smola et~al.(2007)Smola, Gretton, Song, and
  Sch{\"o}lkopf]{smola07hilbert}
Alexander Smola, Arthur Gretton, Le~Song, and Bernhard Sch{\"o}lkopf.
\newblock A {H}ilbert space embedding for distributions.
\newblock In \emph{Algorithmic Learning Theory (ALT)}, pages 13--31, 2007.

\bibitem[Song et~al.(2007)Song, Smola, Gretton, and
  Borgwardt]{song07dependence}
Le~Song, Alexander~J. Smola, Arthur Gretton, and Karsten~M. Borgwardt.
\newblock A dependence maximization view of clustering.
\newblock In \emph{International Conference on Machine Learning (ICML)}, pages
  815–--822, 2007.

\bibitem[Song et~al.(2012)Song, Smola, Gretton, Bedo, and
  Borgwardt]{song12feature}
Le~Song, Alex Smola, Arthur Gretton, Justin Bedo, and Karsten Borgwardt.
\newblock Feature selection via dependence maximization.
\newblock \emph{Journal of Machine Learning Research}, 13\penalty0
  (1):\penalty0 1393--1434, 2012.

\bibitem[Sriperumbudur et~al.(2010)Sriperumbudur, Gretton, Fukumizu,
  Sch{\"o}lkopf, and Lanckriet]{sriperumbudur10hilbert}
Bharath Sriperumbudur, Arthur Gretton, Kenji Fukumizu, Bernhard Sch{\"o}lkopf,
  and Gert Lanckriet.
\newblock Hilbert space embeddings and metrics on probability measures.
\newblock \emph{Journal of Machine Learning Research}, 11:\penalty0 1517--1561,
  2010.

\bibitem[Steinwart(2001)]{steinwart01influence}
Ingo Steinwart.
\newblock On the influence of the kernel on the consistency of support vector
  machines.
\newblock \emph{Journal of Machine Learning Research}, 2:\penalty0 67--93,
  2001.

\bibitem[Steinwart and Christmann(2008)]{steinwart08support}
Ingo Steinwart and Andreas Christmann.
\newblock \emph{Support Vector Machines}.
\newblock Springer, 2008.

\bibitem[Szab{\'o} and Sriperumbudur(2018)]{szabo18characteristic2}
Zolt{\'a}n Szab{\'o} and Bharath~K. Sriperumbudur.
\newblock Characteristic and universal tensor product kernels.
\newblock \emph{Journal of Machine Learning Research}, 18\penalty0
  (233):\penalty0 1--29, 2018.

\bibitem[Sz{\'e}kely and Rizzo(2009)]{szekely09brownian}
G{\'a}bor~J. Sz{\'e}kely and Maria~L. Rizzo.
\newblock Brownian distance covariance.
\newblock \emph{The Annals of Applied Statistics}, 3:\penalty0 1236--1265,
  2009.

\bibitem[Sz{\'e}kely et~al.(2007)Sz{\'e}kely, Rizzo, and
  Bakirov]{szekely07measuring}
G{\'a}bor~J. Sz{\'e}kely, Maria~L. Rizzo, and Nail~K. Bakirov.
\newblock Measuring and testing dependence by correlation of distances.
\newblock \emph{The Annals of Statistics}, 35:\penalty0 2769--2794, 2007.

\bibitem[Wang et~al.(2022)Wang, Du, Zhang, and Shi]{wang22rank}
Andi Wang, Juan Du, Xi~Zhang, and Jianjun Shi.
\newblock Ranking features to promote diversity: An approach based on sparse
  distance correlation.
\newblock \emph{Technometrics}, 64\penalty0 (3):\penalty0 384--395, 2022.

\bibitem[Watkins(1999)]{watkins99dynamic}
Chris Watkins.
\newblock Dynamic alignment kernels.
\newblock In \emph{Advances in Neural Information Processing Systems (NIPS)},
  pages 39--50, 1999.

\bibitem[Willard(1970)]{willard70general}
Stephen Willard.
\newblock \emph{General Topology}.
\newblock Addison-Wesley, 1970.

\bibitem[Williams and Seeger(2001)]{williams01using}
Christopher Williams and Matthias Seeger.
\newblock Using the {N}ystr{\"{o}}m method to speed up kernel machines.
\newblock In \emph{Advances in Neural Information Processing Systems (NIPS)},
  pages 682--688, 2001.

\bibitem[Zhang et~al.(2018)Zhang, Filippi, Gretton, and
  Sejdinovic]{zhang18large}
Qinyi Zhang, Sarah Filippi, Arthur Gretton, and Dino Sejdinovic.
\newblock Large-scale kernel methods for independence testing.
\newblock \emph{Statistics and Computing}, 28\penalty0 (1):\penalty0 1--18,
  2018.

\bibitem[Zolotarev(1983)]{zolotarev83probability}
V.~Zolotarev.
\newblock Probability metrics.
\newblock \emph{Theory of Probability and its Applications}, 28:\penalty0
  278--302, 1983.

\end{thebibliography}

\end{document}